\newtheorem{theorem}{Theorem}[section]
\newtheorem{definition}[theorem]{Definition}
\newtheorem{lemma}[theorem]{Lemma}
\newcommand{\V}{\mathcal{V}}
\newcommand{\Z}{\mathcal{Z}}
\newcommand{\R}{\mathcal{R}}
\newcommand{\X}{\mathcal{X}}
\newcommand{\M}{\mathcal{M}}
\newcommand{\vmin}{V_{\text{min}}}
\newcommand{\vmaj}{V_{\text{maj}}}
\newcommand{\rmin}{R_{\text{min}}}
\newcommand{\rmaj}{R_{\text{maj}}}
\newcommand{\mmin}{M_{\text{min}}}
\newcommand{\mmaj}{M_{\text{maj}}}
\newcommand{\true}{\textsf{true}\xspace}
\newcommand{\false}{\textsf{false}\xspace}
\newcommand{\toolname}{\textsc{VeriFair}\xspace}
\newcommand{\fairsquare}{\textsc{FairSquare}\xspace}
\begin{document}

%% Title information
\title{Probabilistic Verification of Fairness Properties via Concentration}
% \title[Short Title]{Full Title}         %% [Short Title] is optional;
%                                         %% when present, will be used in
%                                         %% header instead of Full Title.
% \titlenote{with title note}             %% \titlenote is optional;
%                                         %% can be repeated if necessary;
%                                         %% contents suppressed with 'anonymous'
% \subtitle{Subtitle}                     %% \subtitle is optional
% \subtitlenote{with subtitle note}       %% \subtitlenote is optional;
%                                         %% can be repeated if necessary;
%                                         %% contents suppressed with 'anonymous'

%% Author information
%% Contents and number of authors suppressed with 'anonymous'.
%% Each author should be introduced by \author, followed by
%% \authornote (optional), \orcid (optional), \affiliation, and
%% \email.
%% An author may have multiple affiliations and/or emails; repeat the
%% appropriate command.
%% Many elements are not rendered, but should be provided for metadata
%% extraction tools.

%% Author with single affiliation.
\author{Osbert Bastani}
%\authornote{with author1 note}          %% \authornote is optional;
                                        %% can be repeated if necessary
%\orcid{nnnn-nnnn-nnnn-nnnn}             %% \orcid is optional
\affiliation{
%  \position{Position1}
%  \department{Department1}              %% \department is recommended
  \institution{University of Pennsylvania}            %% \institution is required
%  \streetaddress{Street1 Address1}
%  \city{City1}
%  \state{State1}
%  \postcode{Post-Code1}
  \country{USA}                    %% \country is recommended
}
\email{obastani@seas.upenn.edu}          %% \email is recommended

%% Author with two affiliations and emails.
\author{Xin Zhang}
%\authornote{with author2 note}          %% \authornote is optional;
                                        %% can be repeated if necessary
%\orcid{nnnn-nnnn-nnnn-nnnn}             %% \orcid is optional
\affiliation{
%  \position{Position2a}
%  \department{Department2a}             %% \department is recommended
  \institution{MIT}           %% \institution is required
%  \streetaddress{Street2a Address2a}
%  \city{City2a}
%  \state{State2a}
%  \postcode{Post-Code2a}
  \country{USA}                   %% \country is recommended
}
\email{xzhang@csail.mit.edu}         %% \email is recommended

\author{Armando Solar-Lezama}
\affiliation{
%  \position{Position2b}
%  \department{Department2b}             %% \department is recommended
  \institution{MIT}           %% \institution is required
%  \streetaddress{Street3b Address2b}
%  \city{City2b}
%  \state{State2b}
%  \postcode{Post-Code2b}
  \country{USA}                   %% \country is recommended
}
\email{asolar@csail.mit.edu}         %% \email is recommended

%% Abstract
%% Note: \begin{abstract}...\end{abstract} environment must come
%% before \maketitle command
\begin{abstract}
As machine learning systems are increasingly used to make real world legal and financial decisions, it is of paramount importance that we develop algorithms to verify that these systems do not discriminate against minorities. We design a scalable algorithm for verifying fairness specifications. Our algorithm obtains strong correctness guarantees based on adaptive concentration inequalities; such inequalities enable our algorithm to adaptively take samples until it has enough data to make a decision. We implement our algorithm in a tool called \toolname, and show that it scales to large machine learning models, including a deep recurrent neural network that is more than five orders of magnitude larger than the largest previously-verified neural network. While our technique only gives probabilistic guarantees due to the use of random samples, we show that we can choose the probability of error to be extremely small.
\end{abstract}

%% 2012 ACM Computing Classification System (CSS) concepts
%% Generate at 'http://dl.acm.org/ccs/ccs.cfm'.
\begin{CCSXML}
<ccs2012>
<concept>
<concept_id>10003752.10010124.10010138.10010142</concept_id>
<concept_desc>Theory of computation~Program verification</concept_desc>
<concept_significance>500</concept_significance>
</concept>
</ccs2012>
\end{CCSXML}

\ccsdesc[500]{Theory of computation~Program verification}
%% End of generated code

%% Keywords
%% comma separated list
\keywords{probabilistic verification, machine learning, fairness}  %% \keywords are mandatory in final camera-ready submission

%% \maketitle
%% Note: \maketitle command must come after title commands, author
%% commands, abstract environment, Computing Classification System
%% environment and commands, and keywords command.
\maketitle

\section{Introduction}

Machine learning is increasingly being used to inform sensitive decisions, including legal decisions such as whether to offer bail to a defendant~\cite{lakkaraju2017selective}, and financial decisions such as whether to give a loan to an applicant~\cite{hardt2016equality}. In these settings, for both ethical and legal reasons, it is of paramount importance that decisions are made fairly and without discrimination~\cite{zarsky2014understanding,barocas2016big}. Indeed, one of the motivations for introducing machine learning in these settings is the expectation that machines would not be subject to the same implicit biases that may affect human decision makers. However, designing machine learning models that satisfy fairness criterion has proven to be quite challenging, since these models have a tendency to internalize biases present in the data. Even if sensitive features such as race and gender are withheld from the model, it often internally reconstructs sensitive features.

Our goal is to verify whether a given fairness specification holds for a given machine learning model, focusing on specifications that have been proposed in the machine learning literature. In particular, our goal is \emph{not} to devise new specifications. There has been previous work on trying to verify probabilistic specifications~\cite{sankaranarayanan2013static,sampson2014expressing,gehr2016psi}, including work specifically targeting fairness~\cite{albarghouthi2017fairsquare}. Approaches based on symbolic integration~\cite{gehr2016psi} and numerical integration~\cite{albarghouthi2017fairsquare} have been proposed. However, these approaches can be extremely slow---indeed, previous work using numerical integration to verify fairness properties only scales to neural networks with a single hidden layer containing just three hidden units~\cite{albarghouthi2017fairsquare}, whereas state-of-the-art neural networks can have dozens of layers and millions of hidden units. There has also been prior work aiming to verify probabilistic specifications using approximate techniques such as belief propagation and sampling~\cite{sampson2014expressing}. While these techniques are much more scalable, they typically cannot give soundness guarantees; thus, they can be useful for bug-finding, but are not suitable for verifying fairness properties, where the ability to guarantee the fairness of a given model is very important.

Our approach is to do probabilistic verification by leveraging sampling, using concentration inequalities to provide strong soundness guarantees.
\footnote{We discuss limitations of our approach in Section~\ref{sec:discussion}. Furthermore, while our approach is not a priori specific to fairness, there are several challenges to applying it more broadly, which we also discuss in Section~\ref{sec:discussion}.}
In particular, we provide guarantees of the form:
%Our approach is to leverage sampling, but using \emph{concentration inequalities} that can provide soundness guarantees that are essentially as strong as those provided by numerical integration. In particular, we provide guarantees of the form
\begin{align}
\label{eqn:intro}
\text{Pr}[\hat{Y}=Y]\ge1-\Delta,
\end{align}
where $\hat{Y}$ is the response provided by our algorithm (i.e., whether the specification holds for the given model), and $Y$ is the true answer. To enable such guarantees, we rely on \emph{adaptive concentration inequalities}~\cite{zhao2016adaptive}, which are concentration inequalities where our verification algorithm can improve its estimate $\hat{Y}$ of $Y$ until Eq. \ref{eqn:intro} holds.

We prove that our verification algorithm is both sound and precise in this high-probability sense. While in principle the probabilistic guarantee expressed by the formula is weaker than a traditional soundness guarantee, we show that our approach allows us to efficiently prove the above property with $\Delta$ very close to zero. For example, in our evaluation on a deep neural network, we take $\Delta=10^{-10}$, meaning there is only a $10^{-10}$ probability that a program that our algorithm verifies to be fair is actually unfair. In contrast, the probability of winning the October 2018 Powerball was about 3 times higher (roughly $3\times10^{-9}$)~\cite{powerball}. To the best of our knowledge, our work is the first to use adaptive concentration inequalities to design a probabilistically sound and precise verification algorithm.
%We show that we can take $\Delta$ to be tiny (in our evaluation on a deep neural network, as small as $\Delta=10^{-10}$) and our algorithm still terminates quickly.

Furthermore, while our algorithm is incomplete and can fail to terminate on certain problem instances, we show that nontermination can only occur under an unlikely condition. Intuitively, nontermination can happen in cases where a specification ``just barely holds''---i.e., for a random variable $X$, we want to show that $\mathbb{E}[X]\ge0$, but $\mathbb{E}[X]=0$. Then, the error in our estimate of $\mathbb{E}[X]$ will never be small enough to determine whether $\mathbb{E}[X]\ge0$ holds. Except in these cases, we prove that our algorithm terminates in finite time with probability $1$.

We implement our algorithm in a tool called \toolname, which can be used to verify fairness properties of programs.
\footnote{\toolname is available at \url{https://github.com/obastani/verifair}.}
In particular, we compare \toolname to the state-of-the-art fairness verification tool \fairsquare~\cite{albarghouthi2017fairsquare}; our tool outperforms theirs on each of the 12 largest problem instances in their benchmark. Furthermore, the \fairsquare benchmarks are implemented in Python; compiling their problem instances to native code can yield more than a 200$\times$ increase in the performance of \toolname (in contrast, the running time of their tool is not increased this way, since they use symbolic techniques). Finally, we evaluate \toolname on a much larger benchmark: we study a deep neural network used to classify human-drawn sketches of various objects~\cite{drawclassify,ha2017neural}. Our benchmark consists of neural networks with about 16 million parameters, which is more than 5 orders of magnitude larger than the largest neural network in the \fairsquare benchmark, which has 37 parameters. On this benchmark, \toolname terminates in just 697 seconds (with probability of error $\Delta=10^{-10}$). This result shows that \toolname can scale to large image classification tasks, for which fairness is often an important property---for example, login systems based on face recognition have been shown to make more mistakes detecting minority users than detecting majority users~\cite{simon2009hp}. In summary, our contributions are
\begin{itemize}
\item We propose an algorithm for verifying fairness properties of machine learning models based on adaptive concentration inequalities (Section~\ref{sec:algorithm}).
\item We prove that our algorithm is sound and precise in a high-probability sense, and guarantee termination except in certain pathelogical cases---most importantly, fairness does not ``just barely'' hold (Section~\ref{sec:theory}).
\item We implement our algorithm in a tool called \toolname. We show that \toolname substantially outperforms the state-of-the-art fairness verifier \fairsquare, and can furthermore scale to problem instances more than $10^6\times$ larger than \fairsquare (Section~\ref{sec:evaluation}).
\end{itemize}

\section{Motivating Example}

Consider the simple classifier $f_{\text{job}}$ shown on the left-hand side of Figure~\ref{fig:example} (adapted from~\cite{albarghouthi2017fairsquare}). This classifier predicts whether a given candidate should be offered a job based on two features: the ranking of the college they attended and their number of years of work experience. For both legal and ethical reasons, we may want to ensure that $f_{\text{job}}$ does not discriminate against minorities. There are a number of ways to formalize nondiscrimination. In this section, we show how our techniques can be applied to checking a fairness specification called \emph{demographic parity}~\cite{calders2009building}; we discuss additional fairness specifications of interest in Section~\ref{sec:fairnessspecifications}. Demographic parity is based on legal guideline for avoiding hiring discrimination is the ``80\% rule''~\cite{biddle2006adverse}. This rule says that the rate at which minority candidates are offered jobs should be at least 80\% of the rate at which majority candidates are offered jobs:
\begin{align*}
Y_{\text{job}}\equiv\left(\frac{\mu_{\text{female}}}{\mu_{\text{male}}}\ge0.8\right),
\end{align*}
where
\begin{align*}
\mu_{\text{male}}&=\text{Pr}[\text{offer}=1\mid\text{gender}=\text{male}] \\
\mu_{\text{female}}&=\text{Pr}[\text{offer}=1\mid\text{gender}=\text{female}].
\end{align*}
Then, $f_{\text{job}}$ satisfies demographic parity if $Y_{\text{job}}=\true$.

\begin{figure}
\footnotesize
\begin{verbatim}
        def offer_job(col_rank, years_exp)         def population_model():
           if col_rank <= 5:                          is_male ~ bernoulli(0.5)
              return true                             col_rank ~ normal(25, 10)
           elif years_exp > 5:                        if is_male:
              return true                                years_exp ~ normal(15, 5)
           else:                                      else:
              return false                               years_exp ~ normal(10, 5)
                                                      return col_rank, years_exp
\end{verbatim}
\caption{Left: A classifier $f_{\text{job}}:\mathbb{R}^2\to\{\true,\false\}$ for deciding whether to offer a job to a candidate (adapted from~\cite{albarghouthi2017fairsquare}). This classifier takes as input two features---the candidate's college ranking (${\tt col\_rank}$), and the candidate's years of work experience (${\tt years\_exp}$). Right: A population model $P_{\text{job}}$ over the features ${\tt is\_male}$, ${\tt col\_rank}$, and ${\tt years\_exp}$ of job candidates. Note that a candidate's years of work experience is affected by their gender.}
\label{fig:example}
\end{figure}

Note that the demographic parity specification assumes given a distribution $P$ of features for job candidates, which we call a \emph{population model}~\cite{albarghouthi2017fairsquare}, since $\mu_{\text{male}}$ and $\mu_{\text{female}}$ are conditional expectations over this distribution. In general, a population model is specified as a probabilistic program that takes no inputs, and returns the features (i.e., college ranking and years of work experience) for a randomly sampled member of that population. For example, on the right-hand side of Figure~\ref{fig:example}, we show a population model $P_{\text{job}}$ over job candidates. We refer to $P_{\text{job}}\mid\text{gender}=\text{male}$ as the \emph{majority subpopulation}, and $P_{\text{job}}\mid\text{gender}=\text{female}$ as the \emph{minority subpopulation}. In this example, male candidates have more years of experience on average than female candidates, but they have the same college ranking on average. We discuss how population models can be obtained in Section~\ref{sec:discussion}.

Given classifier $f_{\text{job}}$, demographic parity specification $Y_{\text{job}}$ with population model $P_{\text{job}}$, and a desired confidence level $\Delta\in\mathbb{R}_+$, the goal of our verification algorithm is to check whether $Y_{\text{job}}$ holds. In particular, our algorithm estimates the fairness of $f$ by iteratively sampling random values
\begin{align*}
V_{a,1},...,V_{a,n}\sim P_{\text{job}}\mid\text{gender}=a
\end{align*}
for each $a\in\{\text{male},\text{female}\}$, and then using these samples to estimate $\mu_{\text{male}}$ and $\mu_{\text{female}}$:
\begin{align*}
\hat{\mu}_a=\frac{1}{n}\sum_{i=1}^nf(V_{a,i}).
\end{align*}
Then, our algorithm uses $\hat{\mu}_{\text{male}}$ and $\hat{\mu}_{\text{female}}$ to estimate $Y_{\text{job}}$:
\begin{align*}
\hat{Y}_{\text{job}}\equiv\left(\frac{\hat{\mu}_{\text{female}}}{\hat{\mu}_{\text{male}}}\ge0.8\right).
\end{align*}
Note that $\hat{Y}_{\text{job}}$ is easy to compute; the difficulty is bounding the probability of error, namely, $\gamma=\text{Pr}[\hat{Y}_{\text{job}}\neq Y_{\text{job}}]\in\mathbb{R}_+$. In particular, our estimates $\hat{\mu}_{\text{male}}$ and $\hat{\mu}_{\text{female}}$ may have errors; thus, $\hat{Y}_{\text{job}}$ may differ from the true value $Y_{\text{job}}$. It is well known that $\gamma\to0$ as the number of samples $n$ goes to infinity; thus, while we can never guarantee that fairness holds, we can do so with arbitrarily high confidence. In particular, for any $\Delta\in\mathbb{R}_+$, our algorithm returns $\hat{Y}_{\text{job}}$ satisfying
\begin{align}
\label{eqn:guarantee}
\text{Pr}[\hat{Y}_{\text{job}}=Y_{\text{job}}]\ge1-\Delta.
\end{align}
The key challenge is establishing finite sample bounds on $\gamma$, and furthermore, doing so in an adaptive way so it can collect as much data as needed to ensure that Eq. \ref{eqn:guarantee} holds (i.e., $\gamma\le\Delta$). In particular, there are two key techniques our algorithm uses to establish Eq. \ref{eqn:guarantee}. First, our algorithm uses an \emph{adaptive concentration inequality} (from~\cite{zhao2016adaptive}) to establish lemmas on the error of the estimates $\hat{\mu}_{\text{male}}$ and $\hat{\mu}_{\text{female}}$, e.g.,
\begin{align}
\label{eqn:lemma}
\text{Pr}[|\hat{\mu}_a-\mu_a|\le\varepsilon]&\ge1-\delta_a
\end{align}
for $a\in\{\text{male},\text{female}\}$. Standard concentration inequalities can only establish bounds of the form Eq.~\ref{eqn:lemma} for a fixed number of samples $n$. However, our algorithm cannot a priori know how many samples it needs to establish Eq.~\ref{eqn:guarantee}; instead, it adaptively takes new samples until it determines that Eq.~\ref{eqn:guarantee} holds. To enable this approach, we use adaptive concentration inequalities, which we describe in Section~\ref{sec:adaptive}.

Second, it uses the lemmas in Eq.~\ref{eqn:lemma} to derive a bound
\begin{align*}
\text{Pr}[\hat{Y}_{\text{job}}=Y_{\text{job}}]\ge1-\gamma.
\end{align*}
We describe how our algorithm does so in Section~\ref{sec:inference}.

Finally, our algorithm terminates once $\gamma\le\Delta$, at which point we guarantee that the estimate $\hat{Y}_{\text{job}}$ satisfies Eq.~\ref{eqn:guarantee}, i.e., our algorithm accurately outputs whether fairness holds with high probability. In particular, our algorithm is sound and precise in a probabilistic sense. Furthermore, our algorithm terminates with probability $1$ unless the problem instance is pathelogical in one of two ways: (i) $\mu_{\text{male}}=0$ (so $Y_{\text{job}}$ contains a division by zero), or (ii) fairness ``just barely'' holds, i.e., $\frac{\mu_{\text{female}}}{\mu_{\text{male}}}=0.8$. In our evaluation, we show that even for $\Delta=10^{-10}$, our algorithm terminates quickly on a deep neural network benchmark---i.e., we can feasibly require that our algorithm make a mistake with probability at most $10^{-10}$.

\begin{figure}
\small
\begin{minipage}{0.15\textwidth}
\begin{alignat*}{3}
&T::=~&&\mu_Z\mid...&& \\
&&&\mid c\mid...&& \\
&&&\mid T+T&& \\
&&&\mid -T&& \\
&&&\mid T\cdot T&& \\
&&&\mid T^{-1}&& \\
&S::=~&&T\ge0&& \\
&&&\mid S\wedge S&& \\
&&&\mid S\vee S&& \\
&&&\mid\neg S.&&
\end{alignat*}
\end{minipage}
\hspace{1.5in}
\begin{minipage}{0.25\textwidth}
\begin{align*}
\llbracket\mu_Z\rrbracket&=\mu_Z \\
\llbracket c\rrbracket&=c \\
\llbracket X+X'\rrbracket&=\llbracket X\rrbracket+\llbracket X'\rrbracket \\
\llbracket -X\rrbracket&=-\llbracket X\rrbracket \\
\llbracket X\cdot X'\rrbracket&=\llbracket X\rrbracket\cdot\llbracket X'\rrbracket \\
\llbracket X^{-1}\rrbracket&=\llbracket X\rrbracket^{-1} \\
\llbracket X\ge0\rrbracket&=\mathbb{I}[\llbracket X\rrbracket\ge0] \\
\llbracket Y\wedge Y'\rrbracket&=\llbracket Y\rrbracket\wedge\llbracket Y'\rrbracket \\
\llbracket Y\vee Y'\rrbracket&=\llbracket Y\rrbracket\vee\llbracket Y'\rrbracket \\
\llbracket\neg Y\rrbracket&=\neg\llbracket Y\rrbracket.
\end{align*}
\end{minipage}
\caption{Left: Specification syntax. Here, $S$ and $T$ are nonterminal symbols (with $S$ being the start symbol), and the remaining symbols are terminals. The terminal symbols $\mu_Z,...$ represent the respective means of given Bernoulli random variables $Z,...$. In our setting, $Z,...$ typically encode the distribution of some statistic (e.g., rate of positive decisions) of $f$ for some subpopulation. The terminal symbols $c,...\in\mathbb{R}$ represent real-valued constants. Right: Specification semantics. Here, $X\in\mathcal{L}(T)$ and $Y\in\mathcal{L}(S)$ (where $\mathcal{L}(A)$ is the context-free language generated by $A$). The indicator function $\mathbb{I}[C]$ returns \true if $C$ holds and \false otherwise.}
\label{fig:specificationlanguage}
\end{figure}

\section{Problem Formulation}
\label{sec:problem}

We formalize the fairness properties that our algorithm can verify; our formulation is based on previous work~\cite{albarghouthi2017fairsquare}.

\subsection{Verification Algorithm Inputs}

\paragraph{{\bf\em Classification program.}}

Our goal is to verify fairness properties for a deterministic program $f:\V\to\{0,1\}$ that maps given members of a population $\V$ (e.g., job applicants) to a single binary output $r\in\R=\{0,1\}$ (e.g., whether to offer the applicant a job). For example, $f$ may be a machine learning classifier such as a neural network. Note that $f$ may use parameters learned from training data; in this case, our verification algorithm operates on the output of the training algorithm. Our verification algorithm only requires blackbox access to $f$, i.e., for any chosen input $v\in\V$, it can execute $f$ on $v$ to obtain the corresponding output $r=f(v)$.

\paragraph{{\bf\em Population model.}}

We assume we are given a probability distribution $P_{\V}$ over $\V$, which we refer to as the \emph{population model}, encoded as a probabilistic program that takes no inputs and construct a random member $V\sim P_{\V}$ of the population. Furthermore, we assume that our algorithm can sample conditional distributions $P_{\V}\mid C$, for some logical predicate $C$ over $\V$ (i.e., $C:\V\to\{\true,\false\}$). For example, assuming $\V$ is discrete, our algorithm can do so using rejection sampling---we randomly sample $V\sim P_{\V}$ until $C(V)=\true$, and return this $V$. The predicate $C$ is dependent on the fairness property that we are trying to prove; in our evaluation, we show that for the fairness properties that we study, the necessary predicates have sufficiently large support that rejection sampling is reasonably efficient.

\paragraph{{\bf\em Specification language.}}

The syntax and semantics of the specifications that we aim to verify are shown in Figure~\ref{fig:specificationlanguage}. The start symbol of the grammar is $S$. In this grammar, the symbol $\mu_Z$ (where $Z$ is a Bernoulli random variable) represents the expected value of $Z$, and $c\in\mathbb{R}$ is a numerical constant. The remainder of this grammar enables us to construct arithmetic expressions of the expectated values $\mu_Z$ and the constants $c$. Intuitively, this specification language enables us to encode arithmetic relationships between various conditional expectations that should hold. The advantage of introducing a specification language is that we can flexibly verify a wide range of fairness specifications in the same framework. As we show in Section~\ref{sec:fairnessspecifications}, a number of fairness specifications that have been proposed in the literature can be expressed in our specification language.

\subsection{Fairness Specifications}
\label{sec:fairnessspecifications}

Next, we describe how three fairness specifications from the machine learning literature can be formalized in our specification language; the best fairness specification to use is often context specific. We discuss additional specifications that can be represented in our language in Section~\ref{sec:discussion}. We first establish some notation. For any probability distribution $P_{\Z}$ over a space $\Z$ with corresponding random variable $Z\sim P_{\Z}$, we let $\mu_Z=\mathbb{E}_{Z\sim P_{\Z}}[Z]$ denote the expectation of $Z$. Recall that for a Bernoulli random variable $Z\sim P_{\Z}$, we have $\mu_Z=\text{Pr}_{Z\sim P_{\Z}}[Z=1]$.

\paragraph{{\bf\em Demographic parity.}}

Intuitively, our first property says that minority members should be classified as $f(V)=1$ at approximately the same rate as majority candidates~\cite{calders2009building}.
\begin{definition}
\label{def:demographicparity}
Let
\begin{align*}
  \vmaj&\sim P_{\V}\mid A=\text{maj} \\
  \vmin&\sim P_{\V}\mid A=\text{min}
\end{align*}
be conditional random variables for members of the majority and minority subpopulations, respectively. Let $\rmaj=f(\vmaj)$ and $\rmin=f(\vmin)$ be the Bernoulli random variables denoting whether the classifier $f$ offers a favorable outcome to a member of the majority and minority subpopulation, respectively. Given $c\in[0,1]$, the {\bf\em demographic parity} property is
\begin{align*}
  Y_{\text{parity}}\equiv\left(\frac{\mu_{\rmin}}{\mu_{\rmaj}}\ge1-c\right).
\end{align*}
\end{definition}
In our example of hiring, the majority subpopulation is $P_{\text{job}}\mid\text{gender}=\text{male}$, the minority subpopulation is $P_{\text{job}}\mid\text{gender}=\text{female}$, and the classifier $f_{\text{job}}:\mathbb{R}^2\to\{0,1\}$ determines whether a candidate with the given years of experience and college rank is offered a job. Then, demographic parity says that for every male candidate offered a job, at least $1-c$ female candidates should be offered a job.

\paragraph{{\bf\em Equal opportunity.}}

Intuitively, our second property says that \emph{qualified} members of the minority subpopulation should be classified as $f(V)=1$ at roughly the same rate as qualified members of the majority subpopulation~\cite{hardt2016equality}.
\begin{definition}
\label{def:equalopportunity}
Let $q\in\mathcal{Q}=\{\text{qual},\text{unqual}\}$ indicate whether the candidate is qualified, and let
\begin{align*}
  \vmaj&\sim P_{\V}\mid A=\text{maj},~Q=\text{qual} \\
  \vmin&\sim P_{\V}\mid A=\text{min},~Q=\text{qual}
\end{align*}
be conditional random variables over $\V$ representing qualified members of the majority and minority subpopulations, respectively. Let $\rmaj=f(\vmaj)$ and $\rmin=f(\vmin)$ denote whether candidates $\vmaj$ and $\vmin$ are offered jobs according to $f$, respectively. Then, the {\bf\em equal opportunity} property is
\begin{align*}
  Y_{\text{equal}}\equiv\left(\frac{\mu_{\rmin}}{\mu_{\rmaj}}\ge1-c\right)
\end{align*}
for a given constant $c\in[0,1]$.
\end{definition}
Continuing our example, this property says that for every job offered to a \emph{qualified} male candidate, at least $1-c$ \emph{qualified} female candidates should be offered a job as well.

\paragraph{{\bf\em Path-specific causal fairness.}}

Intuitively, our third property says that the outcome (e.g., job offer) should not depend directly on a sensitive variable (e.g., gender), but may depend indirectly on the sensitive covariate through other \emph{mediator covariates} deemed directly relevant to predicting job performance (e.g., college degree)~\cite{nabi2018fair}. For simplicity, we assume that the mediator covariate $\M=\{0,1\}$ is binary, that we are given a distribution $P_{\M}$ over $\M$, and that the classifier $f:\V\times\M\to\{0,1\}$ is extended to be a function of $\M$.
\begin{definition}
\label{def:pathspecificfairness}
Let
\begin{align*}
  \vmaj&\sim P_{\V}\mid A=\text{maj} \\
  \mmaj&\sim P_{\M}\mid A=\text{maj},~V=\vmaj \\
  \rmaj&=f(\vmaj,\mmaj)
\end{align*}
be how a member of the majority subpopulation is classified by $f$, and let
\begin{align*}
  \vmin&\sim P_{\V}\mid A=\text{min} \\
  \mmin&\sim P_{\M}\mid A=\text{maj},~V=\vmin \\
  \rmin&=f(\vmin,\mmin)
\end{align*}
be how a member of the minority subpopulation is classified by $f$, except that their mediator covariate $M$ is drawn as if they were a member of the majority subpopulation. Given $c\in[0,1]$, the {\bf\em path-specific causal fairness} property is
\begin{align*}
  Y_{\text{causal}}\equiv(\mu_{\rmin}-\mu_{\rmaj}\ge-c).
\end{align*}
\end{definition}
The key insight in this specification is how we sample the mediator variable $\mmin$ for a member $\vmin$ of the minority population. In particular, we sample $\mmin$ conditioned on the characteristics $\vmin$, except that we change the sensitive attribute to $A=\text{maj}$ instead of $A=\text{min}$. Intuitively, $\mmin$ is the value of the mediator variable if $\vmin$ were instead a member of the majority population, but everything else about them stays the same. In our example, suppose that we have a mediator covariate $\text{college}$ (either yes or no) and a non-mediator covariate $\texttt{years\_exp}$.
%and the goal is to allow $f$ to discriminate based on years of work experience, but not on whether the candidate went to college.
Then, the path-specific causal fairness property says that a female candidate should be given a job offer with similar probability as a male candidate---except she went to college as if she were a male candidate (but everything else about her---i.e., her years of job experience---stays the same). Thus, this specification measures the effect of gender on job offer, but ignoring the effect of gender on whether they went to college.

\section{Verification Algorithm}
\label{sec:algorithm}

We now describe our verification algorithm.

\begin{algorithm}[t]
\begin{algorithmic}
\Procedure{Verify}{$P_{\Z},Y,\Delta$}
\State $s\gets0$
\State $n\gets0$
\While{{\bf true}}
\State $Z\sim P_{\Z}$
\State $s\gets s+Z$
\State $n\gets n+1$
\State $\delta_Z\gets\Delta/\llbracket Y\rrbracket_{\delta}$
\State $\varepsilon_Z\gets\varepsilon(\delta_Z,n)$
\State $\Gamma\gets\left\{\mu_Z:\left(s/n,\varepsilon_Z,\delta_Z\right)\right\}$
\If{$\Gamma\vdash Y:(\true,\gamma)$ {\bf and} $\gamma\le\Delta$}
\State\Return{\bf \true}
\ElsIf{$\Gamma\vdash Y:(\false,\gamma)$ {\bf and} $\gamma\le\Delta$}
\State\Return{\bf \false}
\EndIf
\EndWhile
\EndProcedure
\end{algorithmic}
\caption{Algorithm for verifying the given specification $Y\in\mathcal{L}(S)$. The quantity $\varepsilon(\delta_Z,n)$ is defined in Eq.~\ref{eqn:epsilon}. 
The rules for checking $\Gamma\vdash Y:(I,\gamma)$, for $I\in\{\true,\false\}$, are shown in Figure~\ref{fig:inference}.}
\label{alg:verify}
\end{algorithm}

\subsection{High-Level Algorithm}

The intuition behind our algorithm is that for a Bernoulli random variable $Z$ with distribution $P_{\Z}$, we can use a fixed number of random samples $Z_1,...,Z_n\sim P_{\Z}$ to estimate $\mu_Z$:
\begin{align}
\label{eqn:bernoulliestimate}
\hat{\mu}_Z=\frac{1}{n}\sum_{i=1}^nZ_i.
\end{align}
Note that no matter how many samples we take, there may always be some error $\varepsilon$ between our estimate $\hat{\mu}_Z$ and the true expected value $\mu_Z$. Our algorithm uses adaptive concentration inequalities to prove high-probability bounds on this error. Then, it uses these bounds to establish high-probability bounds on the output of our algorithm---i.e., whether the fairness specification holds. We describe each of these components in more detail in the remainder of this section.

\paragraph{{\bf\em Adaptive concentration inequalities.}}

We can use concentration inequalities to establish high-probability bounds on the error $|\hat{\mu}_Z-\mu_Z|$ of our estimate $\hat{\mu}_Z$ of $\mu_Z$ of the form
\begin{align}
\label{eqn:bernoulliconcentration}
\text{Pr}_{Z_1,...,Z_n\sim P_{\Z}}[|\hat{\mu}_Z-\mu_Z|\le\varepsilon]\ge1-\delta.
\end{align}
Note that the probability is taken over the (independent) random samples $Z_1,...,Z_n\sim P_{\Z}$ used in the estimate $\hat{\mu}_Z$; when there is no ambiguity, we omit this notation.

Our algorithm uses adaptive concentration inequalities to establish bounds of the form Eq.~\ref{eqn:bernoulliconcentration}. In particular, they enable the algorithm to continue to take samples to improve its estimate $\hat{\mu}_Z$. Once our algorithm terminates, the adaptive concentration inequality guarantees that a bound of the form Eq.~\ref{eqn:bernoulliconcentration} holds (for a given $\delta\in\mathbb{R}_+$; then, $\varepsilon$ is a function of $\delta$ specified by the inequality). We describe the adaptive concentration inequalities we use in Section~\ref{sec:adaptive}.

\paragraph{{\bf\em Concentration for expressions.}}

Next, consider an expression $X\in\mathcal{L}(T)$. We can use substitute $\hat{\mu}_Z$ for $\mu_Z$ in $X$ to obtain an estimate $E$ for $\llbracket X\rrbracket$. Then, given that Eq.~\ref{eqn:bernoulliconcentration} holds, we show how to derive high-probability bounds of the form
\begin{align}
\label{eqn:xconcentration}
\text{Pr}[|E-\llbracket X\rrbracket|\le\varepsilon]\ge1-\delta.
\end{align}
We use the notation $X:(E,\varepsilon,\delta)$ to denote that Eq.~\ref{eqn:xconcentration} holds; we call this relationship a \emph{lemma}. Similarly, for $Y\in\mathcal{L}(S)$, we can substitute $\hat{\mu}_Z$ for $\mu_Z$ in $Y$ to obtain an estimate $I$ for $\llbracket Y\rrbracket$, and derive high-probability bounds of the form
\begin{align}
\label{eqn:yconcentration}
\text{Pr}[I=\llbracket Y\rrbracket]\ge1-\gamma.
\end{align}
Unlike Eq.~\ref{eqn:xconcentration}, we can establish that $I$ exactly equals $\llbracket Y\rrbracket$ with high probability; this difference arises because $\llbracket Y\rrbracket\in\{\true,\false\}$ are discrete values, whereas $\llbracket X\rrbracket\in\mathbb{R}$ are continuous values. We describe inference rules used to derive these lemmas $X:(E,\varepsilon,\delta)$ and $Y:(I,\gamma)$ in Section~\ref{sec:inference}.

\paragraph{{\bf\em Verification algorithm.}}

Given a classifier $f:\V\to\{0,1\}$, a population model $P_{\V}$, a specification $Y\in\mathcal{L}(S)$, and a confidence level $\Delta\in\mathbb{R}_+$, our goal is determine whether $Y$ is true with probability at least $1-\Delta$. For simplicity, we assume that $Y$ only has a single subexpression of the form $\mu_Z$ (where $Z$ is a Bernoulli random variable with distribution $P_{\Z}$); it is straightforward to generalize to the case where $Y$ contains multiple such subexpressions. At a high level, our algorithm iteratively computes more and more accurate estimates $\hat{\mu}_Z$ of $\mu_Z$ until $\hat{\mu}_Z$ is sufficiently accurate such that it can be used to compute an estimate $I$ of $\llbracket Y\rrbracket$ satisfying Eq.~\ref{eqn:yconcentration}. In particular, on the $n$th iteration, our algorithm performs these steps:
\begin{enumerate}
\item Draw a random sample $Z_n\sim P_{\Z}$, and update its estimate $\hat{\mu}_Z$ of $\mu_Z$ according to Eq.~\ref{eqn:bernoulliestimate}.
\item Establish a lemma $\mu_Z:(\hat{\mu}_Z,\varepsilon_Z,\delta_Z)$ using the adaptive concentration inequality (for a chosen value of $\delta_Z$).
\item Use the inferences rules to derive a lemma $Y:(I,\gamma)$ from the lemma in the previous step.
\item Terminate if $\gamma\le\Delta$; otherwise, continue.
\end{enumerate}
The full algorithm is shown in Algorithm~\ref{alg:verify}. In the body of the algorithm, $s$ is a running sum of the $n$ samples $Z_1,...,Z_n\sim P_{\Z}$ taken so far, so $\hat{\mu}_Z=\frac{s}{n}$. The variables $\delta_Z$ and $\varepsilon_Z$ come from our adaptive concentration inequality, described in Section~\ref{sec:adaptive}. Furthermore, $\delta_Z$ is chosen to be sufficiently small such that we can compute an estimate $I$ of $\llbracket Y\rrbracket$ with the desired confidence level $\Delta$, as we describe in Section~\ref{sec:choosingdelta}.

\subsection{Adaptive Concentration Inequalities}
\label{sec:adaptive}

Concentration inequalities can be used to establish bounds of the form Eq.~\ref{eqn:bernoulliconcentration}. For example, Hoeffding's inequality says Eq.~\ref{eqn:bernoulliconcentration} holds for $\delta=2e^{-2n\varepsilon^2}$ (equivalently, $\varepsilon=\sqrt{\frac{1}{2n}\log\frac{2}{\delta}}$)~\cite{hoeffding1963probability}:
\begin{align}
\label{eqn:hoeffding}
\text{Pr}_{Z_1,...,Z_n\sim P_{\Z}}[|\hat{\mu}_Z-\mu_Z|\le\varepsilon]\ge1-2e^{-2n\varepsilon^2}.
\end{align}
Then, for any $\varepsilon,\delta\in\mathbb{R}_+$, we can establish Eq.~\ref{eqn:bernoulliconcentration} by taking $n$ sufficiently large---in particular, because $2e^{-2n\varepsilon^2}\to0$ as $n\to\infty$, so for sufficiently large $n$, we have $\delta\le2e^{-2n\varepsilon^2}$.

A priori, we cannot know how large $n$ must be, since we do not know how small $\varepsilon$ must be for us to be able to prove or disprove the fairness specification. For example, for a specification of form $Y\equiv(\mu_Z\ge d)$, if $\mu_Z$ is very close to $d$, then we need $\varepsilon$ to be very small to ensure that our estimate $\hat{\mu}_Z$ is close to $\mu_Z$. For example, consider the two conditions
\begin{align}
\label{eqn:bootstrapbound}
&C_0:~\hat{\mu}_Z-d-\varepsilon\ge0 \\
&C_1:~\hat{\mu}_Z-d+\varepsilon<0 \nonumber
\end{align}
If $C_0$ holds, then together with the fact that $|\hat{\mu}_Z-\mu_Z|\le\varepsilon$, we can conclude that
\begin{align*}
\mu_Z\ge\hat{\mu}_Z-\varepsilon\ge d,
\end{align*}
Similarly, if $C_1$ holds, then we can conlude that
\begin{align*}
\mu_Z\le\hat{\mu}_Z+\varepsilon<d,
\end{align*}
However, a prior, we do not know $\hat{\mu}_Z$, so we cannot directly use these conditions to determine how small to take $\varepsilon$. Instead, our algorithm iteratively samples more and more points so $\varepsilon$ becomes smaller and smaller (for fixed $\delta$) until one of the two conditions $C_0$ and $C_1$ in Eq.~\ref{eqn:bootstrapbound} holds.

To implement this strategy, we have to account for multiple hypothesis testing. In particular, we need to establish a series of bounds for the estimates $\hat{\mu}_Z^{(0)},~\hat{\mu}_Z^{(1)},~...$ of $\mu_Z$ on successive iterations of our algorithm. For simplicitly, suppose that we apply Eq.~\ref{eqn:hoeffding} to two estimates $\hat{\mu}_Z^{(0)}$ and $\hat{\mu}_Z^{(1)}$ of $\mu_Z$:
\begin{align*}
\text{Pr}[|\hat{\mu}_Z^{(0)}-\mu_Z|\le\varepsilon]&\ge1-\delta \\
\text{Pr}[|\hat{\mu}_Z^{(1)}-\mu_Z|\le\varepsilon]&\ge1-\delta,
\end{align*}
where $\delta=2e^{-2n\varepsilon^2}$. The problem is that while we have established that each of the two events $|\hat{\mu}_Z^{(0)}-\mu_Z|\le\varepsilon$ and $|\hat{\mu}_Z^{(1)}-\mu_Z|\le\varepsilon$ occur with high probability $1-\delta$, we need for \emph{both} of these events to hold with high probability. One way we can do so is to take a union bound, in which case we get
\begin{align*}
\text{Pr}[|\hat{\mu}_Z^{(0)}-\mu_Z|\le\varepsilon\wedge|\hat{\mu}_Z^{(1)}-\mu_Z|\le\varepsilon]&\ge1-2\delta.
\end{align*}
Rather than building off of Hoeffding's inequality, our algorithm uses \emph{adaptive} concentration inequalities, which naturally account for multiple hypothesis testing. In particular, they enable our algorithm to continue to take samples to improve its estimate $\hat{\mu}_Z$. Upon termination, our algorithm has obtained $J$ samples $Z_i\sim P_{\Z}$. Note that $J$ is a random variable, since it depends on the previously taken samples $Z_i$, which our algorithm uses to decide when to terminate. Then, an adaptive concentration inequality guarantees that a bound of the form Eq.~\ref{eqn:bernoulliconcentration} holds, where $J$ is substituted for $n$ and $\varepsilon$ is specified by the bound. In particular, we use the following adaptive concentration inequality based on ~\cite{zhao2016adaptive}.
\begin{theorem}
\label{thm:bernoulliconcentration}
Given a Bernoulli random variable $Z$ with distribution $P_{\Z}$, let $\{Z_i\sim P_{\Z}\}_{i\in\mathbb{N}}$ be i.i.d. samples of $Z$, let
\begin{align*}
\hat{\mu}_Z^{(n)}=\frac{1}{n}\sum_{i=1}^nZ_i,
\end{align*}
let $J$ be a random variable on $\mathbb{N}\cup\{\infty\}$ such that $\text{Pr}[J<\infty]=1$, and let
\begin{align}
  \label{eqn:epsilon}
  \varepsilon(\delta,n)=\sqrt{\frac{\frac{3}{5}\cdot\log(\log_{11/10}n+1)+\frac{5}{9}\cdot\log(24/\delta)}{n}}.
\end{align}
Then, given any $\delta\in\mathbb{R}_+$, we have
\begin{align*}
\text{Pr}[|\hat{\mu}_Z^{(J)}-\mu_Z|\le\varepsilon(\delta,J)]\ge1-\delta.
\end{align*}
\end{theorem}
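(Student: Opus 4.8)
The plan is to reduce the stopping-time statement to the \emph{anytime} (uniform-in-$n$) concentration bound of Zhao et al.~\cite{zhao2016adaptive}, and then to invoke the latter with the sub-Gaussian parameter appropriate to a Bernoulli variable. First I would observe that it suffices to prove the uniform bound
\[
\text{Pr}\left[\exists n\in\mathbb{N}:\ |\hat{\mu}_Z^{(n)}-\mu_Z|>\varepsilon(\delta,n)\right]\le\delta .
\]
Given this, the reduction to an arbitrary $J$ is immediate: because $\text{Pr}[J<\infty]=1$, on a probability-one event $J$ takes some finite value $n\in\mathbb{N}$, and there the event $\{|\hat{\mu}_Z^{(J)}-\mu_Z|>\varepsilon(\delta,J)\}$ is contained in $\{\exists n:\ |\hat{\mu}_Z^{(n)}-\mu_Z|>\varepsilon(\delta,n)\}$. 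Taking complements yields $\text{Pr}[|\hat{\mu}_Z^{(J)}-\mu_Z|\le\varepsilon(\delta,J)]\ge1-\delta$. The crucial point is that the uniform bound holds \emph{simultaneously} over all $n$, so it applies no matter how $J$ is chosen---in particular, even when $J$ is the data-dependent stopping rule of Algorithm~\ref{alg:verify}; this is exactly what frees us from having to know the number of samples in advance.

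Next I would establish the uniform bound by specializing the adaptive concentration inequality of Zhao et al. Since $Z_i\in\{0,1\}$, the centered variables $Z_i-\mu_Z$ lie in an interval of length $1$ and are therefore $\tfrac12$-sub-Gaussian by Hoeffding's lemma, so their result applies. Their bound is proved by a \emph{peeling} argument: partition the index set into geometric blocks $B_k=\{n:(11/10)^{k}\le n<(11/10)^{k+1}\}$ (this is the source of the $\log_{11/10}$ term), apply a maximal form of the sub-Gaussian tail bound on each block---via the exponential supermartingale $\exp(\lambda S_n-\lambda^2\sigma^2 n/2)$ together with Doob's maximal inequality---and allocate the failure budget across blocks as a summable sequence $\sum_k\delta_k\le\delta$. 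For an index $n$ in block $k$ one has $k\le\log_{11/10}n$, so the per-block threshold is dominated by $\varepsilon(\delta,n)$, and the union over blocks gives total failure probability at most $\delta$.

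The main obstacle is matching the explicit constants $\tfrac35$, $\tfrac59$, $24$, and the base $11/10$ appearing in Eq.~\ref{eqn:epsilon}. These are not free: they arise from fixing the geometric ratio of the peeling blocks, the sub-Gaussian parameter $\sigma=\tfrac12$ of a Bernoulli variable, and the particular allocation of the $\delta_k$---which is what introduces the $\log(\log_{11/10}n+1)$ factor, chosen to keep $\sum_k\delta_k$ finite and below $\delta$. I expect the bulk of the work---which Zhao et al. carry out---to be the bookkeeping showing that the per-block exponential-tail thresholds, summed against this budget, collapse into exactly the closed form $\varepsilon(\delta,n)$ rather than a strictly larger expression. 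Everything else, namely the sub-Gaussian verification for Bernoulli variables and the stopping-time reduction above, is routine.
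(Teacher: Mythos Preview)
Your proposal is correct and follows essentially the same approach as the paper: both invoke the adaptive concentration inequality of Zhao et al.\ after observing that centered Bernoulli variables are $\tfrac{1}{2}$-subgaussian, and both recover the stated $\varepsilon(\delta,n)$ by the substitution $b=\tfrac{5}{9}\log(24/\delta)$. The only presentational difference is that the paper quotes Zhao et al.'s result already in stopping-time form (as Theorem~\ref{thm:concentration}), so the uniform-in-$n$-to-stopping-time reduction you spell out is absorbed into the cited statement rather than carried out separately; your sketch of the peeling argument is additional detail the paper leaves to the reference.
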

We give a proof in Appendix~\ref{sec:bernoulliconcentrationproof}.
%One subtlety in Theorem~\ref{thm:bernoulliconcentration} is that it only holds if we can guarantee that $J$ is finite with probability $1$---or equivalently, that our algorithm terminates in finite time with probability $1$. This guarantee is somewhat challenging to prove; see Section~\ref{sec:termination}.

\begin{figure*}
\small
\begin{minipage}{\textwidth}
\[\dfrac{\mu_Z:(E,\varepsilon,\delta)\in\Gamma}{\Gamma\vdash\mu_Z:(E,\varepsilon,\delta)}~(\text{random variable})\hspace{0.2in}
\dfrac{c\in\mathbb{R}}{\Gamma\vdash(c,0,0)}~(\text{constant})\hspace{0.2in}
\dfrac{\Gamma\vdash X:(E,\varepsilon,\delta),~\Gamma\vdash X':(E',\varepsilon',\delta')}{\Gamma\vdash X+X':(E+E',\varepsilon+\varepsilon',\delta+\delta')}~(\text{sum})\] \\
\[\dfrac{\Gamma\vdash X:(E,\varepsilon,\delta)}{\Gamma\vdash -X:(-E,\varepsilon,\delta)}~(\text{negative})\hspace{0.2in}
\dfrac{\Gamma\vdash X:(E,\varepsilon,\delta),~|E|>\varepsilon}{\Gamma\vdash X^{-1}:(E^{-1},\frac{\varepsilon}{|E|\cdot(|E|-\varepsilon)},\delta)}~(\text{inverse})\] \\
\[\dfrac{\Gamma\vdash X:(E,\varepsilon,\delta),~\Gamma\vdash X':(E',\varepsilon',\delta')}{\Gamma\vdash X\cdot X':(E\cdot E',|E|\cdot\varepsilon'+|E'|\cdot\varepsilon+\varepsilon\cdot\varepsilon',\delta+\delta')}~(\text{product})\] \\
\[\dfrac{\Gamma\vdash X:(E,\varepsilon,\delta),~E-\varepsilon\ge0}{\Gamma\vdash X\ge0:(\true,\delta)}~(\text{inequality true})\hspace{0.2in}
\dfrac{\Gamma\vdash X:(E,\varepsilon,\delta),~E+\varepsilon<0}{\Gamma\vdash X\ge0:(\false,\delta)}~(\text{inequality false})\] \\
\[\dfrac{\Gamma\vdash Y:(I,\gamma),~\Gamma\vdash Y':(I',\gamma')}{\Gamma\vdash Y\wedge Y':(I\wedge I',\gamma+\gamma')}~(\text{and})\hspace{0.15in}
\dfrac{\Gamma\vdash Y:(I,\gamma),~\Gamma\vdash Y':(I',\gamma')}{\Gamma\vdash Y\vee Y':(I\vee I',\gamma+\gamma')}~(\text{or})\hspace{0.15in}
\dfrac{\Gamma\vdash Y:(I,\gamma)}{\Gamma\vdash\neg Y:(\neg I,\gamma)}~(\text{not})\]
\caption{Inference rules used to derive lemmas $X:(E,\varepsilon,\delta)$ and $Y:(I,\gamma)$ for specifications $X\in\mathcal{L}(T)$ and $Y\in\mathcal{L}(S)$.}
\label{fig:inference}
\end{minipage}
\end{figure*}

\subsection{Concentration for Specifications}
\label{sec:inference}

Now, we describe how our algorithm derives estimates $E$ for $\llbracket X\rrbracket$ (where $X\in\mathcal{L}(T)$) and estimates $I$ for $\llbracket Y\rrbracket$ (where $Y\in\mathcal{L}(S)$), as well as high-probability bounds on these estimates. We use the notation $X:(E,\varepsilon,\delta)$ to denote that $E\in\mathbb{R}$ is an estimate for $\llbracket X\rrbracket$ with corresponding high-probability bound
\begin{align}
\label{eqn:xlemma}
\text{Pr}[|E-\llbracket X\rrbracket|\le\varepsilon]\ge1-\delta,
\end{align}
where $\varepsilon,\delta\in\mathbb{R}_+$. We call Eq.~\ref{eqn:xlemma} a \emph{lemma}. Similarly, we use the notation $Y:(I,\gamma)$ to denote that $I\in\{\true,\false\}$ is an estimate of $\llbracket Y\rrbracket$ with corresponding high-probability bound
\begin{align}
\label{eqn:ylemma}
\text{Pr}[I=\llbracket Y\rrbracket]\ge1-\gamma,
\end{align}
where $\gamma\in\mathbb{R}_+$. Then, let $\Gamma=\{\mu_Z:(\hat{\mu}_Z,\varepsilon,\delta)\}$ be an environment of lemmas for the subexpressions $\mu_Z$. In Figure~\ref{fig:inference}, we show the inference rules that our algorithm uses to derive lemmas for expressions $X\in\mathcal{L}(T)$ and $Y\in\mathcal{L}(S)$ given $\Gamma$. The rules for expectations $\mu_Z$ and constants $c$ are straightforward. Next, consider the rule for sums---its premise is
\begin{align*}
\text{Pr}[|E-\llbracket X\rrbracket|\le\varepsilon]&\ge1-\delta \\
\text{Pr}[|E'-\llbracket X'\rrbracket|\le\varepsilon']&\ge1-\delta'.
\end{align*}
By a union bound, the events $|E-\llbracket X\rrbracket|\le\varepsilon$ and $|E'-\llbracket X'\rrbracket|\le\varepsilon'$ hold with probability at least $1-(\delta+\delta')$, so
\begin{align*}
|(E+E')-(\llbracket X\rrbracket+\llbracket X'\rrbracket)|
&\le|E-\llbracket X\rrbracket|+|E'-\llbracket X'\rrbracket| \\
&\le\varepsilon+\varepsilon'.
\end{align*}
Thus, we have lemma $X+X':(E+E',\varepsilon+\varepsilon',\delta+\delta')$, which is exactly the conclusion of the rule for sums. The rules for products, inverses, and if-then-else statements hold using similar arguments; the only subtlety is that for inverses, a constraint $|E|>\varepsilon$ in the premise of the rule is needed to ensure that that $\llbracket X\rrbracket\neq0$ with probability at least $1-\delta$. The rules for conjunctions, disjunctions, and negations also follow using similar arguments. There are two rules for inequalities $X\ge0$---one for the case where the inequality evaluates to \true, and one for the case where it evaluates to \false. Note that at most one rule may apply (but it may be the case that neither rule applies). We describe the rule for the former case; the rule for the latter case is similar.

Note that the inequality evaluates to true as long as $\llbracket X\rrbracket\ge0$. Thus, suppose that $E$ is an estimate of $X$ satisfying the premise of the rule, i.e.,
\begin{align*}
\text{Pr}[|E-\llbracket X\rrbracket|\le\varepsilon]&\ge1-\delta \\
E-\varepsilon&\ge0.
\end{align*}
Rearranging the inequality $E-\llbracket X\rrbracket\le\varepsilon$ gives
\begin{align*}
\llbracket X\rrbracket\ge E-\varepsilon\ge0.
\end{align*}
Thus, $\llbracket X\ge0\rrbracket=\true$ with probability at least $1-\delta$ (since the original inequality holds with probability at least $1-\delta$). In other words, we can conclude that $X\ge0:(\true,\delta)$, which is exactly the conclusion of the rule for the inequality evaluating to \true. In summary, we have:
\begin{theorem}
\label{thm:sound}
The inference rules in Figure~\ref{fig:inference} are sound.
\end{theorem}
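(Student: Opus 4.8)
The plan is to prove soundness by simultaneous structural induction on the two derivation judgments $\Gamma\vdash X:(E,\varepsilon,\delta)$ for $X\in\mathcal{L}(T)$ and $\Gamma\vdash Y:(I,\gamma)$ for $Y\in\mathcal{L}(S)$, under the standing assumption that every lemma $\mu_Z:(E,\varepsilon,\delta)\in\Gamma$ is valid, i.e.\ satisfies Eq.~\ref{eqn:xlemma}. Soundness then means precisely that each derivable expression lemma satisfies Eq.~\ref{eqn:xlemma} and each derivable specification lemma satisfies Eq.~\ref{eqn:ylemma}. The two base cases are immediate: the (random variable) rule holds because the corresponding lemma is assumed valid in $\Gamma$, and the (constant) rule holds because $\llbracket c\rrbracket=c$ with $\varepsilon=\delta=0$, so $\text{Pr}[|c-c|\le 0]=1$.

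For the inductive cases I would use a single uniform template. In each binary rule, let $A$ and $A'$ denote the two ``good'' events supplied by the inductive hypotheses on the subderivations (e.g.\ $|E-\llbracket X\rrbracket|\le\varepsilon$ and $|E'-\llbracket X'\rrbracket|\le\varepsilon'$), which hold with probability at least $1-\delta$ and $1-\delta'$ respectively. A union bound gives $\text{Pr}[A\cap A']\ge 1-(\delta+\delta')$, matching the failure probability in each conclusion. It then remains only to verify a purely deterministic error bound: on the event $A\cap A'$ the stated combined error holds. For the sum rule this is the triangle inequality (already shown in the text); for the negative rule it is trivial; for the product rule, writing $\llbracket X\rrbracket=E-a$ and $\llbracket X'\rrbracket=E'-b$ with $|a|\le\varepsilon$ and $|b|\le\varepsilon'$, the identity $EE'-\llbracket X\rrbracket\llbracket X'\rrbracket=Eb+E'a-ab$ yields $|EE'-\llbracket X\rrbracket\llbracket X'\rrbracket|\le|E|\varepsilon'+|E'|\varepsilon+\varepsilon\varepsilon'$, exactly the conclusion.

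The specification rules follow the same pattern, with equality of discrete values replacing real-valued error bounds. For (and), on the intersection of the two good events (probability $\ge 1-(\gamma+\gamma')$) we have $I=\llbracket Y\rrbracket$ and $I'=\llbracket Y'\rrbracket$, hence $I\wedge I'=\llbracket Y\wedge Y'\rrbracket$ by the semantics of Figure~\ref{fig:specificationlanguage}; (or) and (not) are analogous, with (not) requiring only a single event. The two inequality rules bridge the real-valued and discrete judgments: in the (inequality true) case, on the event $|E-\llbracket X\rrbracket|\le\varepsilon$ together with the side condition $E-\varepsilon\ge0$ we get $\llbracket X\rrbracket\ge E-\varepsilon\ge0$, so $\llbracket X\ge0\rrbracket=\true$ with probability at least $1-\delta$; the (inequality false) case is symmetric using $E+\varepsilon<0$.

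I expect the inverse rule to be the one genuinely delicate step. Unlike the others it carries a side condition $|E|>\varepsilon$ in its premise, and this condition does double duty: on the good event $|E-\llbracket X\rrbracket|\le\varepsilon$ it forces $|\llbracket X\rrbracket|\ge|E|-\varepsilon>0$, which both guarantees that $\llbracket X\rrbracket^{-1}$ is well-defined (so that Eq.~\ref{eqn:xlemma} is meaningful) and supplies the denominator of the error bound. The estimate then follows from the manipulation $|E^{-1}-\llbracket X\rrbracket^{-1}|=\frac{|\llbracket X\rrbracket-E|}{|E|\cdot|\llbracket X\rrbracket|}\le\frac{\varepsilon}{|E|\cdot(|E|-\varepsilon)}$, where the last inequality again leans on $|\llbracket X\rrbracket|\ge|E|-\varepsilon$. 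The subtle bookkeeping point is that well-definedness of the inverse is only guaranteed on the good event; on its complement, of probability at most $\delta$, we simply charge any failure to the $\delta$ budget exactly as in the other rules. Once this union-bound-plus-deterministic-bound template is in place, every remaining rule is routine.
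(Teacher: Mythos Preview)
Your proposal is correct and follows essentially the same approach as the paper's proof: structural induction on the derivation, with each inductive step consisting of a union bound over the subderivations' failure events followed by a deterministic error (or equality) calculation on the good event. The individual case analyses---including the algebra for the product rule and the handling of the side condition $|E|>\varepsilon$ in the inverse rule---match the paper's arguments essentially verbatim.
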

We give a proof in Appendix~\ref{sec:soundproof}. As an example, we describe how to apply the inference rules to infer whether the demographic parity specification $Y_{\text{parity}}$ holds. Recall that this specification is a function of the Bernoulli random variables $\rmaj$ and $\rmin$. Suppose that
\begin{align*}
  \mu_{\rmaj}&:(E_{\text{maj}},\varepsilon_{\text{maj}},\delta_{\text{maj}}) \\
  \mu_{\rmin}&:(E_{\text{min}},\varepsilon_{\text{min}},\delta_{\text{min}}),
\end{align*}
and that $|E_{\text{maj}}|>\varepsilon_{\text{maj}}$. Let
\begin{align*}
  E_{\text{parity}}&=E_{\text{min}}\cdot E_{\text{maj}}^{-1}-(1-c) \\
  \varepsilon_{\text{parity}}&=|E_{\text{maj}}|^{-1}\cdot\varepsilon_{\text{min}}+\frac{\varepsilon_{\text{maj}}\cdot(|E_{\text{min}}|+\varepsilon_{\text{min}})}{|E_{\text{maj}}|(|E_{\text{maj}}|-\varepsilon_{\text{maj}})}
\end{align*}
Now, if $E_{\text{parity}}-\varepsilon_{\text{parity}}\ge0$, then $Y_{\text{parity}}:(\true,\delta_{\text{maj}}+\delta_{\text{min}})$, and if $E_{\text{parity}}+\varepsilon_{\text{parity}}<0$, then $Y_{\text{parity}}:(\false,\delta_{\text{maj}}+\delta_{\text{min}})$.

\begin{figure}
\small
\begin{minipage}{0.2\textwidth}
\begin{align*}
\llbracket\mu_Z\rrbracket_{\delta}&=1 \\
\llbracket c\rrbracket_{\delta}&=0 \\
\llbracket X+X'\rrbracket_{\delta}&=\llbracket X\rrbracket_{\delta}+\llbracket X'\rrbracket_{\delta} \\
\llbracket-X\rrbracket_{\delta}&=\llbracket X\rrbracket_{\delta} \\
\llbracket X\cdot X'\rrbracket_{\delta}&=\llbracket X\rrbracket_{\delta}+\llbracket X'\rrbracket_{\delta}
\end{align*}
\end{minipage}
\hspace{1.0in}
\begin{minipage}{0.2\textwidth}
\begin{align*}
\llbracket X^{-1}\rrbracket_{\delta}&=\llbracket X\rrbracket_{\delta} \\
\llbracket X\ge0\rrbracket_{\delta}&=\llbracket X\rrbracket_{\delta} \\
\llbracket X\wedge X'\rrbracket_{\delta}&=\llbracket X\rrbracket_{\delta}+\llbracket X'\rrbracket_{\delta} \\
\llbracket X\vee X'\rrbracket_{\delta}&=\llbracket X\rrbracket_{\delta}+\llbracket X'\rrbracket_{\delta} \\
\llbracket\neg X\rrbracket_{\delta}&=\llbracket X\rrbracket_{\delta}
\end{align*}
\end{minipage}
\caption{Inference rules used to compute $\delta_Z$, in particular, $\delta_Z=\Delta/\llbracket Y\rrbracket_{\delta}$, where $Y\in\mathcal{L}(S)$ is the specification to be verified and $\delta\in\mathbb{R}_+$ is the desired confidence.}
\label{fig:choosingdelta}
\end{figure}

\subsection{Choosing $\delta_Z$}
\label{sec:choosingdelta}

To ensure that Algorithm~\ref{alg:verify} terminates, we have to ensure that for any given problem instance, we eventually either prove or disprove the given specification $Y$.\footnote{We require a technical condition on the problem instance; see Section~\ref{sec:theory}.} More precisely, as $n\to\infty$ (where $n$ is the number of samples taken so far), we must derive $\Gamma\vdash Y:(I,\gamma)$ for some $\gamma\le\Delta$ (where $\Delta$ is the given confidence level) and $I\in\{\true,\false\}$, with probability $1$. In particular, the value $\gamma$ depends on the environment $\Gamma=\{\mu_Z:(s/n,\varepsilon_Z,\delta_Z)\}$. In $\Gamma$, our algorithm can choose the value $\delta_Z\in\mathbb{R}_+$ (which determines $\varepsilon_Z=\varepsilon(\delta_Z,n)$ via Eq.~\ref{eqn:epsilon}). Thus, to ensure termination, we have to choose $\delta_Z$ so that we eventually derive $Y:(I,\gamma)$ such that $\gamma\le\Delta$.

In fact, $\gamma$ is a simple function of $\delta_Z$---each inference rule in Figure~\ref{fig:inference} adds the values of $\delta$ (or $\gamma$) for each subexpression of the current expression, so $\gamma$ equals the sum of the values of $\delta$ for each leaf in the syntax tree of $Y$. Since we have assumed there is a single Bernoulli random variable $Z$, each leaf in the syntax tree has either $\delta=\delta_Z$ (for leaves labeled $\mu_Z$) or $\delta=0$ (for leaves labeled $c\in\mathbb{R}$). Thus, $\gamma$ has the form $\gamma=m\cdot\delta_Z$ for some $m\in\mathbb{N}$. The rules in Figure~\ref{fig:choosingdelta} compute this value $m=\llbracket Y\rrbracket_{\delta}$---the base cases are $\llbracket\mu_Z\rrbracket_{\delta}=1$ and $\llbracket c\rrbracket_{\delta}=0$, and the remaining rules add together the values of $m$ for each subexpression of the current expression.

As a consequence, for any $\Delta\in\mathbb{R}_+$, we can derive $Y:(I,\gamma)$ with $\gamma\le\Delta$ from $\Gamma$ by choosing $\delta_Z=\Delta/m$.
\begin{theorem}
\label{thm:inferenceterminate}
Let $(P_{\Z},Y)$ be a well-defined problem instance, and let $\Delta\in\mathbb{R}_+$ be arbitrary. Let $\delta_Z=\Delta/\llbracket Y\rrbracket_{\delta}$, and let
\begin{align*}
  \Gamma^{(n)}=\{\mu_Z:(E^{(n)},\varepsilon(\delta_Z,n),\delta_Z)\}
\end{align*}
be the lemma established on the $n$th iteration of Algorithm~\ref{alg:verify} (i.e., using $n$ random samples $Z\sim P_{\Z}$). Then, for any $\delta_0\in\mathbb{R}_+$, there exists $n_0\in\mathbb{N}$ such that for all $n\ge n_0$, we have
\begin{align*}
  \Gamma^{(n)}\vdash Y:(I,\gamma)
\end{align*}
where $\gamma\le\Delta$ with probability at least $1-\delta_0$.
\end{theorem}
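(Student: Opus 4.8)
The plan is to reduce termination to two facts: that the confidence $\gamma$ produced by any successful derivation is exactly $\Delta$, and that for large $n$ the side conditions guarding the inference rules hold with high probability. For the first, observe that $\delta_Z=\Delta/\llbracket Y\rrbracket_\delta=\Delta/m$ by construction, and that (as noted just before the theorem) every rule in Figure~\ref{fig:inference} merely sums the $\delta$'s of its premises, so any derivation $\Gamma^{(n)}\vdash Y:(I,\gamma)$ has $\gamma=m\cdot\delta_Z=\Delta\le\Delta$. Hence the bound $\gamma\le\Delta$ is automatic, and it suffices to show that for every $\delta_0>0$ the derivation $\Gamma^{(n)}\vdash Y:(I,\gamma)$ \emph{exists} with probability at least $1-\delta_0$ once $n$ is large enough.

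Since (by the single-subexpression assumption) every estimate is a deterministic function of $\hat\mu_Z^{(n)}=s/n$ and $n$, I would condition on the good event $G_n=\{|\hat\mu_Z^{(n)}-\mu_Z|\le t\}$ for a small fixed $t>0$ chosen below. The first step is a structural induction over $X\in\mathcal{L}(T)$ showing that, on $G_n$, the rules derive a lemma $X:(E_X^{(n)},\varepsilon_X^{(n)},\delta_X^{(n)})$ in which $|E_X^{(n)}-\llbracket X\rrbracket|$ is controlled by a continuity estimate (shrinking with $t$) and $\varepsilon_X^{(n)}\le\bar\varepsilon_X^{(n)}$ for a deterministic sequence $\bar\varepsilon_X^{(n)}\to0$. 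The base cases are immediate: the constant rule gives $\varepsilon=0$, and the leaf $\mu_Z$ carries $\varepsilon(\delta_Z,n)$, which tends to $0$ because its numerator grows like $\log\log n$ while its denominator is $n$. The sum and negation rules propagate these bounds trivially.

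The crux is the inverse rule (and, similarly, the product rule), because there the propagated error $\frac{\varepsilon}{|E|\cdot(|E|-\varepsilon)}$ depends on the estimate $E$ itself and the rule carries the side condition $|E|>\varepsilon$. This is exactly where well-definedness of the instance enters: it guarantees $\llbracket X\rrbracket\ne0$ at every inverted subexpression, so on $G_n$ (with $t$ smaller than half the smallest such $|\llbracket X\rrbracket|$) the quantity $|E_X^{(n)}|$ stays bounded away from $0$; combined with $\varepsilon_X^{(n)}\to0$ this forces the side condition $|E|>\varepsilon$ to hold for all large $n$ and makes $\bar\varepsilon_{X^{-1}}^{(n)}=O(\varepsilon_X^{(n)})\to0$. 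I would then lift the argument to $Y\in\mathcal{L}(S)$ by induction on the boolean structure: at each inequality node $X\ge0$, well-definedness again gives $\llbracket X\rrbracket\ne0$, so $E_X^{(n)}\pm\varepsilon_X^{(n)}$ converges to a quantity of the same sign as $\llbracket X\rrbracket$, whence exactly one of $E_X^{(n)}-\varepsilon_X^{(n)}\ge0$ or $E_X^{(n)}+\varepsilon_X^{(n)}<0$ holds for all large $n$; the $\wedge$, $\vee$, and $\neg$ rules then compose the subderivations.

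Finally I would assemble the quantifiers. The induction yields a deterministic $N$, depending only on $t$ and the finitely many margins $|\llbracket X\rrbracket|$ at the inverse and inequality nodes, such that $n\ge N$ together with $G_n$ forces the full derivation $\Gamma^{(n)}\vdash Y:(I,\gamma)$ to exist. By the weak law of large numbers $\hat\mu_Z^{(n)}\to\mu_Z$ in probability, so there is $N_1$ with $\text{Pr}[G_n]\ge1-\delta_0$ for $n\ge N_1$; taking $n_0=\max(N,N_1)$ gives, for every $n\ge n_0$, a derivation with $\gamma=\Delta\le\Delta$ with probability at least $1-\delta_0$. The main obstacle I anticipate is the inductive bookkeeping for the inverse and product rules, where the error bound is itself random through its dependence on $E$ and the side condition can in principle fail on a low-probability event; isolating everything on $G_n$ and invoking well-definedness to keep every estimate away from the poles of the inverses and the boundaries of the inequalities is precisely what makes the argument go through.
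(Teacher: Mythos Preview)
Your proposal is correct and follows essentially the same route as the paper: both prove the result by structural induction on the inference rules, using well-definedness to bound the estimates away from the poles at inverse nodes and away from zero at inequality nodes, and appealing to convergence of $\hat{\mu}_Z^{(n)}$ to $\mu_Z$ at the leaf (the paper invokes Hoeffding's inequality explicitly, which is just a quantitative form of your weak law of large numbers). The only organizational difference is that you condition once on the good event $G_n$ at the leaf and then argue deterministically, whereas the paper threads the tolerances $\varepsilon_0,\delta_0$ through the induction by choosing $\tilde{\varepsilon},\tilde{\varepsilon}_0,\tilde{\delta}_0$ at each inductive step; since there is a single leaf $\mu_Z$, these amount to the same thing, and your packaging is arguably cleaner.
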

We give a proof in Appendix~\ref{sec:inferenceterminateproof}. Note that the probability is taken over the $n$ random samples $Z\sim P_{\Z}$ used to construct $E^{(n)}$. Also, note that the success of the inference is in a high-probability, asymptotic sense---this approach is necessary since adversarial sequences of random samples $Z\sim P_{\Z}$ may cause nontermination, but the probability of such adversarial samples becomes arbitrarily small as $n\to\infty$. Finally, we have focused on the case where there is a single Bernoulli random variable $\mu_Z$. In the general case, we use the same $\delta_Z=\Delta/\llbracket Y\rrbracket_{\delta}$ for each Bernoulli random variable $Z$; Theorem~\ref{thm:inferenceterminate} follows with exactly the same reasoning.

Continuing our example, we describe how $\delta_{\rmaj}$ and $\delta_{\rmin}$ are computed for $Y_{\text{parity}}$. In particular, the inference rules in Figure~\ref{fig:choosingdelta} give $\llbracket Y_{\text{parity}}\rrbracket_{\delta}=2$, so it suffices to choose
\begin{align*}
\delta_{\rmaj}=\delta_{\rmin}=\frac{\Delta}{2}.
\end{align*}
Recall from Section~\ref{sec:inference} that we actually have $\gamma=\delta_{\rmaj}+\delta_{\rmin}$, so this choice indeed suffices to ensure that $\gamma\le\Delta$.

\section{Theoretical Guarantees}
\label{sec:theory}

We prove that Algorithm~\ref{alg:verify} terminates with probability $1$ as long as the given problem instance satisfies a technical condition. Futhermore, we prove that Algorithm~\ref{alg:verify} is sound and precise in a probabilistic sense.

\subsection{Termination}
\label{sec:termination}

Algorithm~\ref{alg:verify} terminates as long as it the given problem instance satisfies the following condition:
\begin{definition}
Given a problem instance consisting of an expression $W\in\mathcal{L}(T)\cup\mathcal{L}(S)$ together with a distribution $P_{\Z}$ for each $\mu_Z$ occuring in $W$, we say the problem instance is {\bf\em well-defined} if its subexpressions are well-defined. If $W\equiv (X\ge0)$ or $W\equiv X^{-1}$, we furthermore require that $\llbracket X\rrbracket\neq0$.
\end{definition}
If $Y$ contains a subexpression $X^{-1}$ such that $\llbracket X\rrbracket=0$, then $\llbracket X^{-1}\rrbracket$ is infinite. As a consequence, Algorithm~\ref{alg:verify} fails to terminate since it cannot estimate of $\llbracket X^{-1}\rrbracket$ to any finite confidence level. Next, the constraint on subexpressions of the form $X\ge0$ is due to the nature of our problem formulation. In particular, consider an expression $X\ge0$, where $\llbracket X\rrbracket=0$. In our setting, we cannot compute $\llbracket\mu_Z\rrbracket$ exactly since we are treating the Bernoulli random variables $Z,...$ as blackboxes. Therefore, we also cannot compute $\llbracket X\rrbracket$ exactly (assuming it contains subexpressions of the form $\mu_Z$). Thus, we can never determine with certainty whether $\llbracket X\rrbracket\ge0$.
\begin{theorem}
\label{lem:terminate}
Given a well-defined problem instance, Algorithm~\ref{alg:verify} terminates with probability $1$, i.e.,
\begin{align*}
  \lim_{n\to\infty}\text{Pr}[\text{Algorithm }~\ref{alg:verify}\text{ terminates}]=1,
\end{align*}
where $n$ is the number of samples taken so far.
\end{theorem}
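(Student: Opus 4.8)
The plan is to reduce this statement to Theorem~\ref{thm:inferenceterminate}, which already establishes that the inference engine succeeds with high probability once the number of samples is large. Let $T\in\mathbb{N}\cup\{\infty\}$ denote the (random) iteration on which Algorithm~\ref{alg:verify} returns, with $T=\infty$ if it never returns. The events $\{T\le n\}$ are increasing in $n$ and their union is $\{T<\infty\}$, so by continuity of probability from below, $\lim_{n\to\infty}\text{Pr}[T\le n]=\text{Pr}[T<\infty]$; this limit is exactly the quantity appearing in the theorem statement, where $\text{Pr}[\text{Algorithm terminates}]$ is read as the probability $\text{Pr}[T\le n]$ that it has returned within the first $n$ samples. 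It therefore suffices to show $\text{Pr}[T<\infty]=1$.

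The key observation is that termination on a given iteration is \emph{implied by} success of the inference on that iteration. Concretely, on iteration $n$ the algorithm forms $\Gamma^{(n)}=\{\mu_Z:(s/n,\varepsilon_Z,\delta_Z)\}$ with $\delta_Z=\Delta/\llbracket Y\rrbracket_{\delta}$ and returns as soon as it can derive $\Gamma^{(n)}\vdash Y:(I,\gamma)$ with $\gamma\le\Delta$ and $I\in\{\true,\false\}$; hence the event that this derivation succeeds on iteration $n$ is contained in $\{T\le n\}$. Now fix any $\delta_0\in\mathbb{R}_+$. Since the instance is well-defined, Theorem~\ref{thm:inferenceterminate} supplies an $n_0\in\mathbb{N}$ such that the derivation $\Gamma^{(n_0)}\vdash Y:(I,\gamma)$ with $\gamma\le\Delta$ holds with probability at least $1-\delta_0$. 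Combining these, $\text{Pr}[T\le n_0]\ge1-\delta_0$, and by monotonicity $\lim_{n\to\infty}\text{Pr}[T\le n]\ge1-\delta_0$. As $\delta_0\in\mathbb{R}_+$ was arbitrary, the limit equals $1$, giving $\text{Pr}[T<\infty]=1$.

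The termination argument itself is just this short limiting deduction; all of the genuine difficulty has been pushed into Theorem~\ref{thm:inferenceterminate}, which is where I expect the main obstacle to lie. That result must show that, as $\varepsilon_Z=\varepsilon(\delta_Z,n)\to0$, the estimate $s/n$ drives one of the two inequality conditions (the analogues of $C_0$ and $C_1$ in Eq.~\ref{eqn:bootstrapbound}) to fire throughout the syntax tree of $Y$ with high probability, and this is precisely where well-definedness is needed: a subexpression $X^{-1}$ or $X\ge0$ with $\llbracket X\rrbracket=0$ would leave the algorithm forever unable to separate its estimate from the relevant threshold. It is also worth flagging a subtlety that the present proof finesses: we do not require an almost-sure eventual-success statement, only the fixed-$n$ high-probability bound of Theorem~\ref{thm:inferenceterminate}; it is the monotonicity of the events $\{T\le n\}$ that lets us upgrade such a pointwise probability bound into a conclusion about the limiting termination probability.
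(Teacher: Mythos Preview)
Your proposal is correct and follows essentially the same approach as the paper: reduce termination to Theorem~\ref{thm:inferenceterminate} (the paper's appendix proof cites the underlying Lemma~\ref{lem:bound}, which is what Theorem~\ref{thm:inferenceterminate} wraps), then use arbitrariness of $\delta_0$ together with monotonicity of the termination events to conclude probability~$1$. Your write-up is in fact more explicit than the paper's about the limiting argument, and your closing remarks correctly identify both where the real work lives and the circularity issue the paper flags in Section~\ref{sec:termination} (that one cannot invoke the adaptive concentration inequality here, which is why the base case of Lemma~\ref{lem:bound} uses Hoeffding directly).
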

The proof of this thoerem is somewhat subtle. In particular, our algorithm only terminates if we can prove that $\hat{\mu}_Z^{(n)}\to\mu_Z$ as $n\to\infty$, where $\hat{\mu}_Z^{(n)}$ is the estimate of $\mu_Z$ established on the $n$th iteration. However, we cannot use our adaptive concentration inequality in Theorem~\ref{thm:bernoulliconcentration} to prove this guarantee, since our adaptive concentration inequality \emph{assumes} that our algorithm terminates with probability $1$. Thus, we have to directly prove that our estimates converge, and then use this fact to prove that our algorithm terminates. We give a full proof in Appendix~\ref{sec:terminateproof}.

The restriction to well-defined properties is not major---for typical problem instances, having $\llbracket X\rrbracket=0$ hold exactly is very unlikely. Furthermore, this restriction to well-defined problem instances is implicitly assumed by current state-of-the-art systems, including \fairsquare~\cite{albarghouthi2017fairsquare}. In particular, it is a necessary restriction for any system that does not exactly evaluate the expectations $\mu_Z$. For example, \fairsquare relies on a technique similar to numerical integration, and can only obtain estimates $\mu_Z\in[E-\varepsilon,E+\varepsilon]$; therefore, it will fail to terminate given an ill-defined problem instance.

\subsection{Probabilistic Soundness and Precision}

Let $Y\in\mathcal{L}(S)$ be a specification, and consider a verification algorithm tasked with computing $\llbracket Y\rrbracket$. Typically, the algorithm is sound if it only returns \true when $\llbracket Y\rrbracket=\true$, and it is precise if it only returns \false when $\llbracket Y\rrbracket=\false$. However, because our algorithm uses random samples to evaluate $\llbracket Y\rrbracket$, it cannot guarantee soundness or precision---e.g., adversarial sequences of samples can cause the algorithm to fail. Instead, we need probabilistic notions of soundness and precision.
\begin{definition}
\label{def:sound}
Let $\Delta\in\mathbb{R}_+$. We say a verification algorithm is {\bf\em$\Delta$-sound} if it returns $\true$ only if
\begin{align*}
\text{Pr}[\llbracket Y\rrbracket=\true]\ge1-\Delta,
\end{align*}
where the probability is taken over the random samples drawn by the algorithm. Furthermore, if the algorithm takes $\Delta$ as a parameter, and is $\Delta$-sound for any given $\Delta\in\mathbb{R}_+$, then we say that the algorithm is {\bf\em probabilistically sound}.
\end{definition}
\begin{definition}
\label{def:precision}
Let $\Delta\in\mathbb{R}_+$. We say a verification algorithm is {\bf\em$\Delta$-precise} if it returns $\false$ only if
\begin{align*}
\text{Pr}[\llbracket Y\rrbracket=\false]\ge1-\Delta
\end{align*}
where the probability is taken over the random samples drawn by the algorithm. Furthermore, if the algorithm takes $\Delta$ as a parameter, and is $\Delta$-precise for any given $\Delta\in\mathbb{R}_+$, then we say that the algorithm is {\bf\em probabilistically precise}.
\end{definition}
\begin{theorem}
\label{thm:terminate}
Algorithm~\ref{alg:verify} is probabilistically sound and probabilistically precise.
\end{theorem}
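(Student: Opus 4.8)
The plan is to reduce probabilistic soundness and precision to the two results already in hand: the adaptive concentration inequality (Theorem~\ref{thm:bernoulliconcentration}) and the soundness of the inference rules (Theorem~\ref{thm:sound}). Fix $\Delta\in\mathbb{R}_+$ and suppose Algorithm~\ref{alg:verify} returns a value $I\in\{\true,\false\}$. Let $J$ denote the (random) number of samples drawn at the moment of return; by the loop guard, the final environment $\Gamma^{(J)}=\{\mu_Z:(\hat\mu_Z^{(J)},\varepsilon(\delta_Z,J),\delta_Z)\}$ satisfies $\Gamma^{(J)}\vdash Y:(I,\gamma)$ with $\gamma\le\Delta$, where $\delta_Z=\Delta/\llbracket Y\rrbracket_{\delta}$. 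The goal is to establish $\text{Pr}[I=\llbracket Y\rrbracket]\ge1-\gamma\ge1-\Delta$; specializing to $I=\true$ gives $\Delta$-soundness and to $I=\false$ gives $\Delta$-precision, and since $\Delta$ is arbitrary this yields probabilistic soundness and probabilistic precision at once.

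The key step is to certify that the environment lemma for $\mu_Z$ actually holds \emph{at the data-dependent index} $J$, i.e. $\text{Pr}[|\hat\mu_Z^{(J)}-\mu_Z|\le\varepsilon(\delta_Z,J)]\ge1-\delta_Z$. This is exactly where the adaptive inequality is indispensable: a fixed-$n$ bound such as Hoeffding (Eq.~\ref{eqn:hoeffding}) is invalid when evaluated at the stopping time $J$, since $J$ is chosen adaptively from the very samples being bounded. Invoking Theorem~\ref{thm:bernoulliconcentration} with $\delta=\delta_Z$ supplies precisely this guarantee, so the lemma in $\Gamma^{(J)}$ is a valid probabilistic statement in the sense of Eq.~\ref{eqn:xlemma}. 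Feeding it into the soundness of the rules (Theorem~\ref{thm:sound}) propagates the bound through the derivation $\Gamma^{(J)}\vdash Y:(I,\gamma)$ and yields $\text{Pr}[I=\llbracket Y\rrbracket]\ge1-\gamma$, as required. In the general case of several Bernoulli variables, the same $\delta_Z=\Delta/\llbracket Y\rrbracket_{\delta}$ is used for each, and the union bound over the finitely many lemmas is already absorbed into the additive accounting of $\gamma$ by the rules, so the argument is unchanged.

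The main obstacle is the hypothesis $\text{Pr}[J<\infty]=1$ required by Theorem~\ref{thm:bernoulliconcentration}: soundness and precision are asserted for \emph{all} instances, not only well-defined ones, so we cannot simply cite the termination result (Theorem~\ref{lem:terminate}). I would handle this by truncation. For each deterministic horizon $T\in\mathbb{N}$, let $J_T=\min(J,T)$, which is a bounded---hence almost surely finite---stopping time, so Theorem~\ref{thm:bernoulliconcentration} applies to $J_T$ and bounds the failure event $\{|\hat\mu_Z^{(J_T)}-\mu_Z|>\varepsilon(\delta_Z,J_T)\}$ by $\delta_Z$. On the event $\{J\le T\}$ one has $J_T=J$, and by the pathwise reading of Theorem~\ref{thm:sound} any erroneous return (say $I=\true$ while $\llbracket Y\rrbracket=\false$) forces the environment lemma to fail at index $J$; hence the erroneous-return event intersected with $\{J\le T\}$ is contained in the truncated failure event and has probability at most $\delta_Z\le\Delta$. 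Letting $T\to\infty$, these events increase to the full erroneous-return event---on which $J<\infty$ automatically, since the algorithm has returned---and continuity of measure gives the bound $\Delta$ without any a priori termination assumption.
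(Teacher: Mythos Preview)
Your argument is correct and its core logic matches the paper's: an incorrect return forces the base lemma $|\hat\mu_Z^{(J)}-\mu_Z|\le\varepsilon(\delta_Z,J)$ to fail at the stopping time $J$, and the adaptive concentration inequality then bounds the probability of that failure by $\delta_Z\le\Delta$. The one point of divergence is how the hypothesis $\text{Pr}[J<\infty]=1$ of Theorem~\ref{thm:bernoulliconcentration} is discharged. The paper does not truncate; instead, in the appendix it states a refined form of the inequality (Theorem~\ref{thm:refinedbernoulliconcentration}) which drops that hypothesis and directly bounds $\text{Pr}[J<\infty\wedge|\hat\mu_Z^{(J)}-\mu_Z|\ge\varepsilon(\delta_Z,J)]\le\delta_Z$, then applies it in one line. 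Your truncation $J_T=\min(J,T)$ followed by monotone convergence is precisely an inline derivation of this refined statement from the main-text version, so the two routes are equivalent rather than genuinely different; the paper's is just shorter because it has the refinement already packaged as a lemma.
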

We give a proof in Appendix~\ref{sec:mainproof}. For ill-defined problem instances, Algorithm~\ref{alg:verify} may fail to terminate, but nontermination is allowed by probabilistic soundness and precision.

\begin{figure*}
\begin{tabular}{cc}
\includegraphics[width=0.45\textwidth]{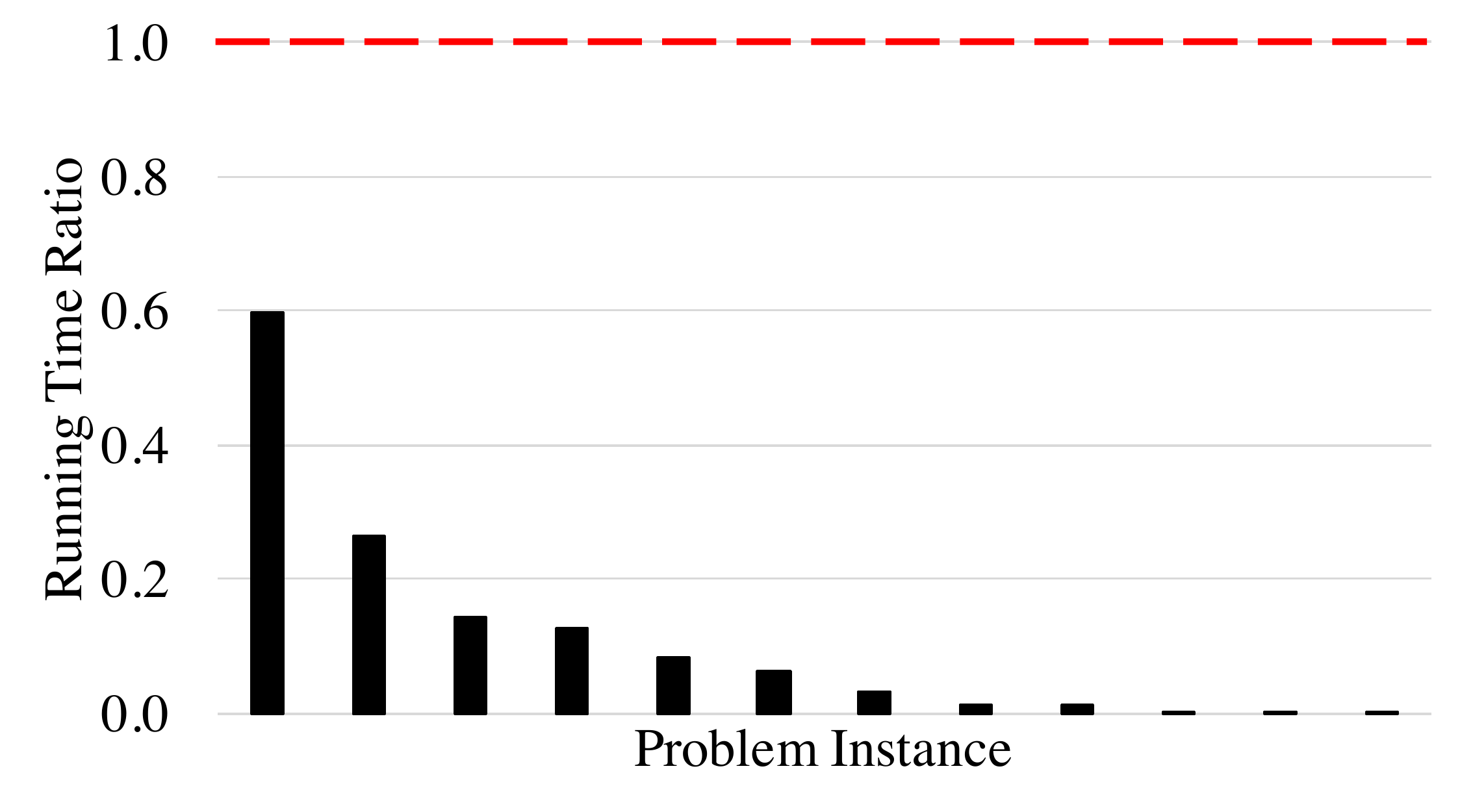} &
\includegraphics[width=0.45\textwidth]{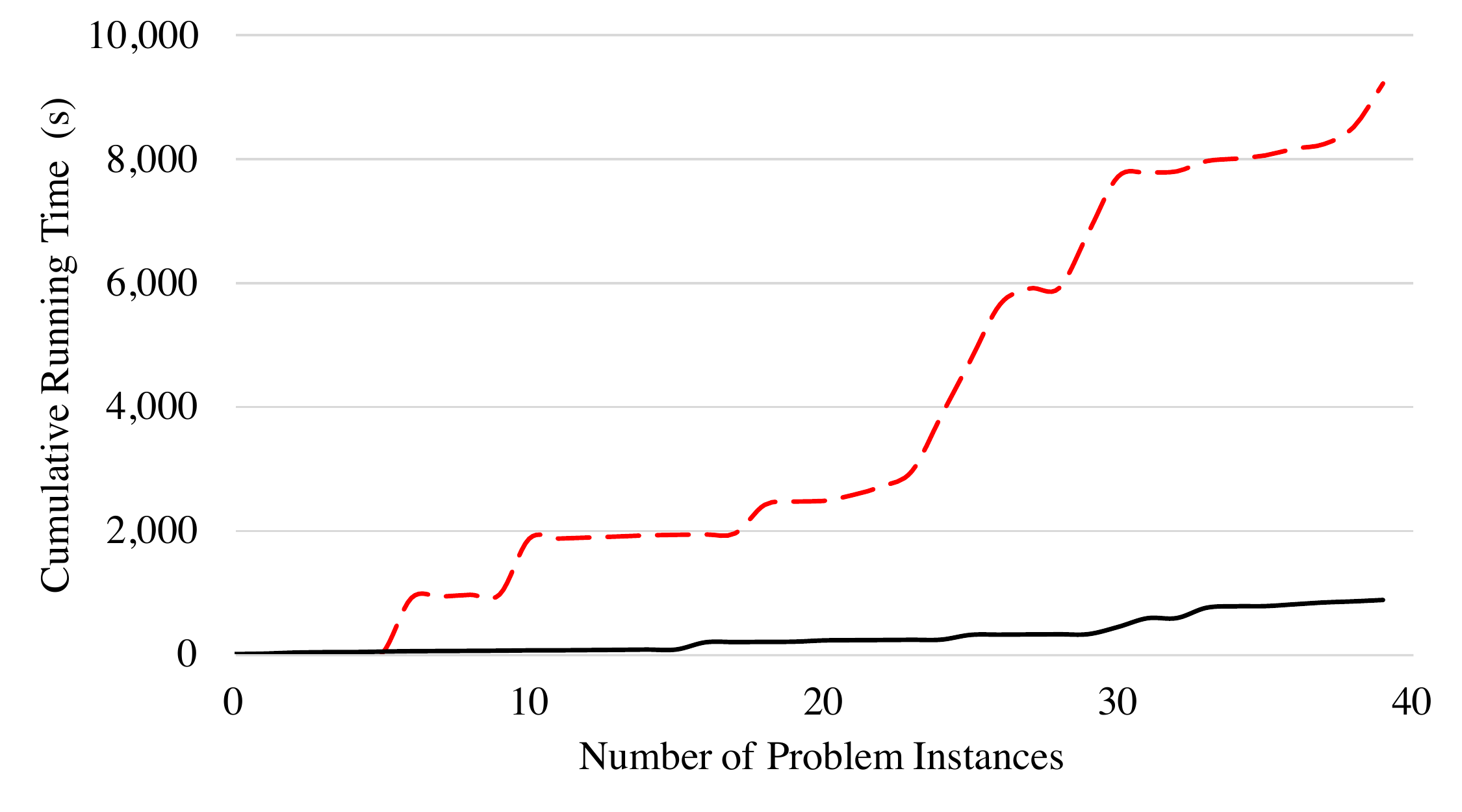} \\
(a) & (b)
\end{tabular}
\caption{(a) Results for the largest problem instances from the \fairsquare benchmark. The $y$-axis is the ratio of the \toolname running time to the \fairsquare running time (so lower is better). The problem instances are along the $x$-axis; we have sorted them from highest to lowest. The red, dashed line at $y=1$ denotes the \fairsquare running time; for all instances below this line, \toolname outperforms \fairsquare. (b) The cumulative running time of \toolname (black, solid) and \fairsquare (red, dashed). In particular, we sorted all 39 problem instances from smallest to largest (in terms of lines of code), and plot the cumulative running time from running the first $i$ benchmarks. The $x$-axis is $i$, and the $y$-axis is the running time.}
\label{fig:fairsquareplots}
\end{figure*}

\section{Evaluation}
\label{sec:evaluation}

We have implemented our algorithm a tool called \toolname, which we evaluate on two benchmarks. First, we compare to \fairsquare on their benchmark, where the goal is to verify whether demographic parity holds~\cite{albarghouthi2017fairsquare}. In particular, we show that \toolname scales substantially better than \fairsquare on every large problem instance in their benchmark (with $\Delta=10^{-10}$).

However, the \fairsquare benchmark fails to truly demonstrate the scalability of \toolname. In particular, it exclusively contains tiny classifiers---e.g., the largest neural network in their benchmark has a single hidden layer with just two hidden units. This tiny example already causes \fairsquare to time out. Indeed, the scalability of \fairsquare depends on the complexity the internal structure of the classifier and population model, whereas the scalability of \toolname only depends on the time it takes to execute these models.

Thus, for our second benchmark, we use a state-of-the-art deep recurrent neural network (RNN) designed to classify sketches~\cite{drawclassify}, together with a state-of-the-art deep generative model for randomly sampling sketches similar to those produced by humans~\cite{ha2017neural}. Together, these two deep neural networks have more than 16 million parameters, which is 5 orders of magnitude larger than the largest neural network in the \fairsquare benchmark. We show that \toolname scales to this benchmark, and furthermore study how its scalability depends on various hyperparameters. In fact, \fairsquare cannot even be applied to this benchmark, since \fairsquare can only be applied to straight line programs but the RNN computation involves a possibly unbounded loop operation.

\subsection{FairSquare Benchmark}

We begin by comparing our tool, \toolname, to \fairsquare, a state-of-the-art fairness verification tool. The results on this benchmark were run on a machine with a 2.2GHz Intel Xeon CPU with 20 cores and 128 GB of memory.

\paragraph{{\bf\em Benchmark.}}

The \fairsquare benchmark contains 39 problem instances. Each problem instance consists of a classifier $f:\V\to\{0,1\}$, where $\V=\mathbb{R}^d$ with $d\in[1,6]$, together with a population model encoding a distribution $P_{\V}$ over $\V$. The classifiers include decision trees with up to 44 nodes, SVMs, and neural networks with up to 2 hidden units. The population models include one where the features are assumed to be independent and two Bayes net models. The goal is to check whether demographic parity holds, taking $c=0.15$ in Definition~\ref{def:demographicparity}. We run \toolname using $\Delta=10^{-10}$ (i.e., the probability of an incorrect response is at most $10^{-10}$).

In theory, \fairsquare provides stronger guarantees than \toolname, since \fairsquare never responds incorrectly. Intuitively, the guarantees provided by \fairsquare are analogous to using \toolname with $\Delta=0$. However, as we discuss below, because we have taken the parameters to be so small, they have essentially no effect on the outputs of \toolname. Also, the population models in the \fairsquare benchmark often involve conditional probabilities. There are many ways to sample such a probability distribution. We use the simplest technique, i.e., rejection sampling; we discuss the performance implications below. Finally, the problem instances in the \fairsquare benchmark are implemented as Python programs. While we report results using the original Python implementations, below we discuss how compiling the benchmarks can substantially speed up execution.

\begin{table*}
\caption{Results from comparing \toolname to \fairsquare~\cite{albarghouthi2017fairsquare}. For each problem instance (i.e., a classifier and population model), we show the total number of lines of code (LOC), the response of each tool, the running time of each tool (in seconds, timed out after 900 seconds), the ratio of the running time of \toolname to that of \fairsquare (lower is better), and for the rejection sampling strategy used by \toolname, the number of accepted samples, total samples, and the acceptance rate. In the ratio of running times, we conservatively assume \fairsquare takes 900 seconds to run if it times out; this ratio sometimes equals 0 due to rounding error.}
\label{tab:fairsquare}
\scriptsize
\begin{tabular}{llrrrrrrrrr}
\hline
\multirow{2}{*}{{\bf Classifier}}
& \multirow{2}{*}{{\bf\shortstack[c]{Pop.\\Model}}}
& \multirow{2}{*}{{\bf LOC}}
& \multicolumn{2}{c}{{\bf Is Fair?}}
& \multicolumn{3}{c}{{\bf Running Time (s)}}
& \multicolumn{3}{c}{{\bf Samples}}
\\
&&& \toolname
& \fairsquare
& \toolname
& \fairsquare
& Ratio
& Accepted
& Total
& Accept Rate
\\
\hline
$\text{DT}_{4}$ & Ind. & 17 & 1 & 1 & 21.2 & 2.1 & 9.9 & 91710 & 443975 & 20.7\% \\
$\text{DT}_{14}$ & Ind. & 34 & 1 & 1 & 120.4 & 4.1 & 29.3 & 365503 & 1768404 & 20.7\% \\
$\text{DT}_{16}$ & Ind. & 38 & 1 & 1 & 17.3 & 5.6 & 3.1 & 49095 & 236822 & 20.7\% \\
$\text{DT}_{16}^{\alpha}$ & Ind. & 42 & 1 & 1 & 3.1 & 6.4 & 0.5 & 7221 & 35377 & 20.4\% \\
$\text{DT}_{44}$ & Ind. & 95 & 1 & 1 & 33.3 & 19.5 & 1.7 & 68078 & 329859 & 20.6\% \\
$\text{SVM}_{3}$ & Ind. & 15 & 1 & 1 & 9.4 & 2.4 & 3.9 & 34304 & 166274 & 20.6\% \\
$\text{SVM}_{4}$ & Ind. & 17 & 1 & 1 & 9.6 & 3.5 & 2.7 & 33158 & 159964 & 20.7\% \\
$\text{SVM}_{4}^{\alpha}$ & Ind. & 19 & 1 & 1 & 1.7 & 3.0 & 0.6 & 5437 & 26013 & 20.9\% \\
$\text{SVM}_{5}$ & Ind. & 19 & 1 & 1 & 10.7 & 6.4 & 1.7 & 36315 & 175729 & 20.7\% \\
$\text{SVM}_{6}$ & Ind. & 21 & 1 & 1 & 7.8 & 5.4 & 1.4 & 28140 & 136722 & 20.6\% \\
$\text{NN}_{2,1}$ & Ind. & 22 & 1 & 1 & 2.3 & 3.9 & 0.6 & 9364 & 45289 & 20.7\% \\
$\text{NN}_{2,2}$ & Ind. & 25 & 1 & 1 & 2.9 & 6.1 & 0.5 & 11407 & 55102 & 20.7\% \\
$\text{NN}_{3,2}$ & Ind. & 27 & 1 & 1 & 6.4 & 435.6 & 0.0 & 20856 & 100855 & 20.7\% \\
$\text{DT}_{4}$ & B.N. 1 & 27 & 0 & 0 & 1.6 & 3.5 & 0.5 & 6208 & 29689 & 20.9\% \\
$\text{DT}_{14}$ & B.N. 1 & 48 & 1 & 1 & 156.0 & 21.8 & 7.1 & 442872 & 2147170 & 20.6\% \\
$\text{DT}_{16}$ & B.N. 1 & 51 & 0 & 0 & 2.4 & 15.3 & 0.2 & 5698 & 27422 & 20.8\% \\
$\text{DT}_{16}^{\alpha}$ & B.N. 1 & 55 & 1 & 1 & 24.4 & 27.7 & 0.9 & 64691 & 313671 & 20.6\% \\
$\text{DT}_{44}$ & B.N. 1 & 111 & 0 & 0 & 17.5 & 353.2 & 0.0 & 33750 & 163661 & 20.6\% \\
$\text{SVM}_{3}$ & B.N. 1 & 25 & 0 & 0 & 3.0 & 4.0 & 0.7 & 10347 & 49845 & 20.8\% \\
$\text{SVM}_{4}$ & B.N. 1 & 30 & 0 & 0 & 4.6 & 5.8 & 0.8 & 15009 & 72556 & 20.7\% \\
$\text{SVM}_{4}^{\alpha}$ & B.N. 1 & 32 & 1 & 1 & 5.2 & 10.4 & 0.5 & 16846 & 81355 & 20.7\% \\
$\text{SVM}_{5}$ & B.N. 1 & 35 & 0 & 0 & 3.5 & 11.1 & 0.3 & 12116 & 58197 & 20.8\% \\
$\text{SVM}_{6}$ & B.N. 1 & 40 & 0 & 0 & 3.0 & 19.0 & 0.2 & 9193 & 44575 & 20.6\% \\
$\text{NN}_{2,1}$ & B.N. 1 & 36 & 1 & 1 & 2.9 & 57.0 & 0.1 & 10345 & 50183 & 20.6\% \\
$\text{NN}_{2,2}$ & B.N. 1 & 39 & 1 & 1 & 4.8 & 32.7 & 0.1 & 14449 & 69779 & 20.7\% \\
$\text{NN}_{3,2}$ & B.N. 1 & 40 & 1 & T.O. & 88.3 & T.O. & 0.1 & 308228 & 1489839 & 20.7\% \\
$\text{DT}_{4}$ & B.N. 2 & 33 & 0 & 0 & 1.4 & 5.8 & 0.2 & 4790 & 23232 & 20.6\% \\
$\text{DT}_{14}$ & B.N. 2 & 54 & 1 & T.O. & 190.1 & T.O. & 0.2 & 524166 & 2535812 & 20.7\% \\
$\text{DT}_{16}$ & B.N. 2 & 57 & 0 & 0 & 3.1 & 35.4 & 0.1 & 7002 & 34194 & 20.5\% \\
$\text{DT}_{16}^{\alpha}$ & B.N. 2 & 61 & 1 & 1 & 24.0 & 60.0 & 0.4 & 61027 & 295445 & 20.7\% \\
$\text{DT}_{44}$ & B.N. 2 & 117 & 0 & T.O. & 22.1 & T.O. & 0.0 & 40841 & 197689 & 20.7\% \\
$\text{SVM}_{3}$ & B.N. 2 & 31 & 0 & 0 & 4.3 & 8.7 & 0.5 & 14392 & 69596 & 20.7\% \\
$\text{SVM}_{4}$ & B.N. 2 & 36 & 0 & 0 & 3.8 & 24.2 & 0.2 & 11113 & 53831 & 20.6\% \\
$\text{SVM}_{4}^{\alpha}$ & B.N. 2 & 38 & 1 & 1 & 5.9 & 22.1 & 0.3 & 18664 & 89394 & 20.9\% \\
$\text{SVM}_{5}$ & B.N. 2 & 41 & 0 & 0 & 3.8 & 496.7 & 0.0 & 12147 & 58115 & 20.9\% \\
$\text{SVM}_{6}$ & B.N. 2 & 42 & 0 & 0 & 3.9 & 87.8 & 0.0 & 11765 & 56820 & 20.7\% \\
$\text{NN}_{2,1}$ & B.N. 2 & 38 & 1 & 1 & 2.9 & 52.2 & 0.1 & 9717 & 47162 & 20.6\% \\
$\text{NN}_{2,2}$ & B.N. 2 & 41 & 1 & 1 & 4.1 & 126.4 & 0.0 & 12729 & 61965 & 20.5\% \\
$\text{NN}_{3,2}$ & B.N. 2 & 42 & 1 & T.O. & 110.9 & T.O. & 0.1 & 387860 & 1880146 & 20.6\% \\
\hline
\end{tabular}
\end{table*}

\paragraph{{\bf\em Results.}}

For both tools, we set a timeout of 900 seconds. We give a detailed results in Table~\ref{tab:fairsquare}. For each problem instance, we show the running times of \toolname and \fairsquare, as well as the ratio
\begin{align*}
  \frac{\text{running time of \toolname}}{\text{running time of \fairsquare}},
\end{align*}
where we conservatively assume that \fairsquare runs in 900 seconds for problem instances in which it times out. We also show the number of lines of code and some statistics about the rejection sampling approach we use to sample the population models.

\toolname outperforms \fairsquare on 30 of the 39 problem instances. More importantly, \toolname scales much better to large problem instances---whereas \fairsquare times out on 4 problem instances, \toolname terminates on all 39 in within 200 seconds. In particular, while \fairsquare relies on numerical integration that may scale exponentially in the problem size, \toolname relies on sampling, which linearly in the time required to execute the population model and classifier.

In Figure~\ref{fig:fairsquareplots} (a), we show results for 12 of the largest problem instances. In particular, we include the largest two each of decision tree, SVM, and neural network classifiers, using each of the two Bayes net population models. As can be seen, \toolname runs faster than \fairsquare on all of the problem instances, and more than twice as fast in all but one.

Similarly, in Figure~\ref{fig:fairsquareplots} (b), we plot the cumulative running time of each tool across all 39 problem instances. For this plot, we sort the problem instances from smallest to largest based on number of lines of code. Then, the plot shows the cumulative running time of the first $i$ problem instances, as a function of $i$. As before, we conservatively assume that \fairsquare terminates in 900 seconds when it times out. As can be seen, \toolname scales significantly better than \fairsquare---\toolname becomes faster than \fairsquare after the first 9 problem instances, and substantially widens that lead as the problem instances become larger.

\paragraph{{\bf\em Compiled problem instances.}}

The running time of \toolname depends linearly on the time taken by a single execution of the population model and classifier. Because the benchmarks are implemented in Python, the running time can be made substantially faster if they are compiled to native code. To demonstrate this speed up, we manually implement two of the problem instances in C++:
\begin{itemize}
\item The decision tree with 14 nodes with the independent population model; in this problem instance, \toolname is slowest relative to \fairsquare (29.3$\times$ slower). \toolname runs the compiled version of this model in just 0.40 seconds, which is a 301$\times$ speed up, and more than 10$\times$ faster than \fairsquare.
\item The decision tree with 14 nodes with Bayes net 2 as the popluation model; in this problem instance, \toolname is slowest overall (190.1 seconds). \toolname runs the compiled version of this model in just 0.58 seconds, which is a 327$\times$ speed up.
\end{itemize}
Note that compiling problem instances would not affect \fairsquare, since it translates them to SMT formula.

\paragraph{{\bf\em Comparison of guarantees.}}

We ran the \toolname ten times on the benchmark; the responses were correct on all iterations. Indeed, because we have set $\Delta=10^{-10}$, it is extremely unlikely that the response of \toolname is incorrect.

\paragraph{{\bf\em Rejection sampling.}}

When the population model contains conditional probabilities, \toolname uses rejection sampling to sample the model. The acceptance rate is always between 20-21\%. This consistency is likely due to the fact that the models in the \fairsquare benchmark are always modeling the same population. Thus, rejection sampling is an effective strategy for the \fairsquare benchmark. Furthermore, we discuss possible alternatives to rejection sampling in Section~\ref{sec:discussion}.

\paragraph{{\bf\em Path-specific causal fairness.}}

We check whether path-specific causal fairness $Y_{\text{causal}}$ holds for three \fairsquare problem instances---the largest classifier of each kind using the Bayes net 2 population model. We use the number of years of education as the mediator covariate. We use $\Delta=10^{-10}$. \toolname concludes that all of the problem instances are fair. The running time for the decision tree $\text{DT}_{44}$ with 44 nodes is 2.47 seconds, for the SVM $\text{SVM}_{6}$ with $d=6$ features terminates is 8.89 seconds, and for the neural network $\text{NN}_{3,2}$ with $d=3$ features and 2 neurons is 0.35 seconds.

\begin{figure*}
\begin{tabular}{cc}
\includegraphics[width=0.45\textwidth]{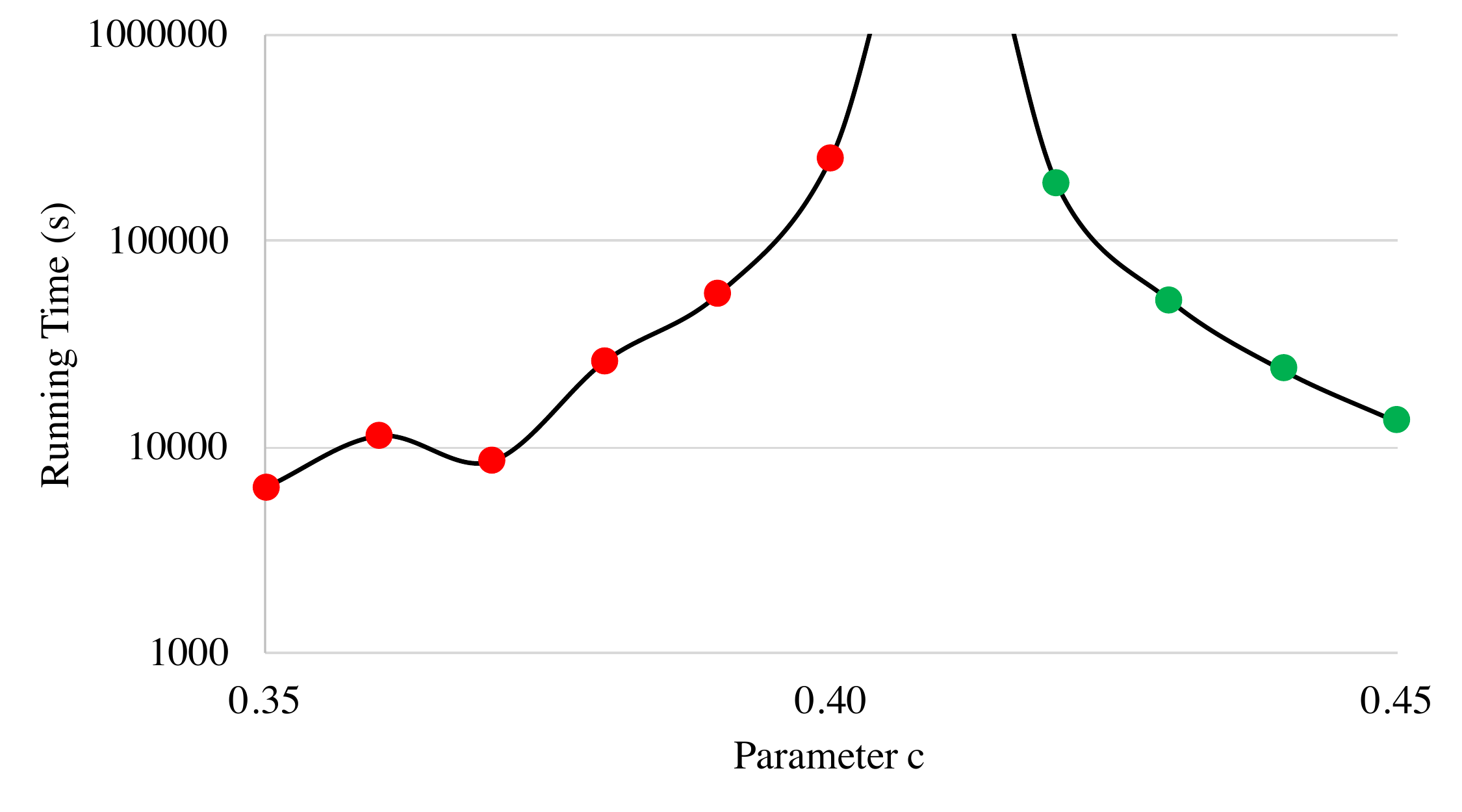} &
\includegraphics[width=0.45\textwidth]{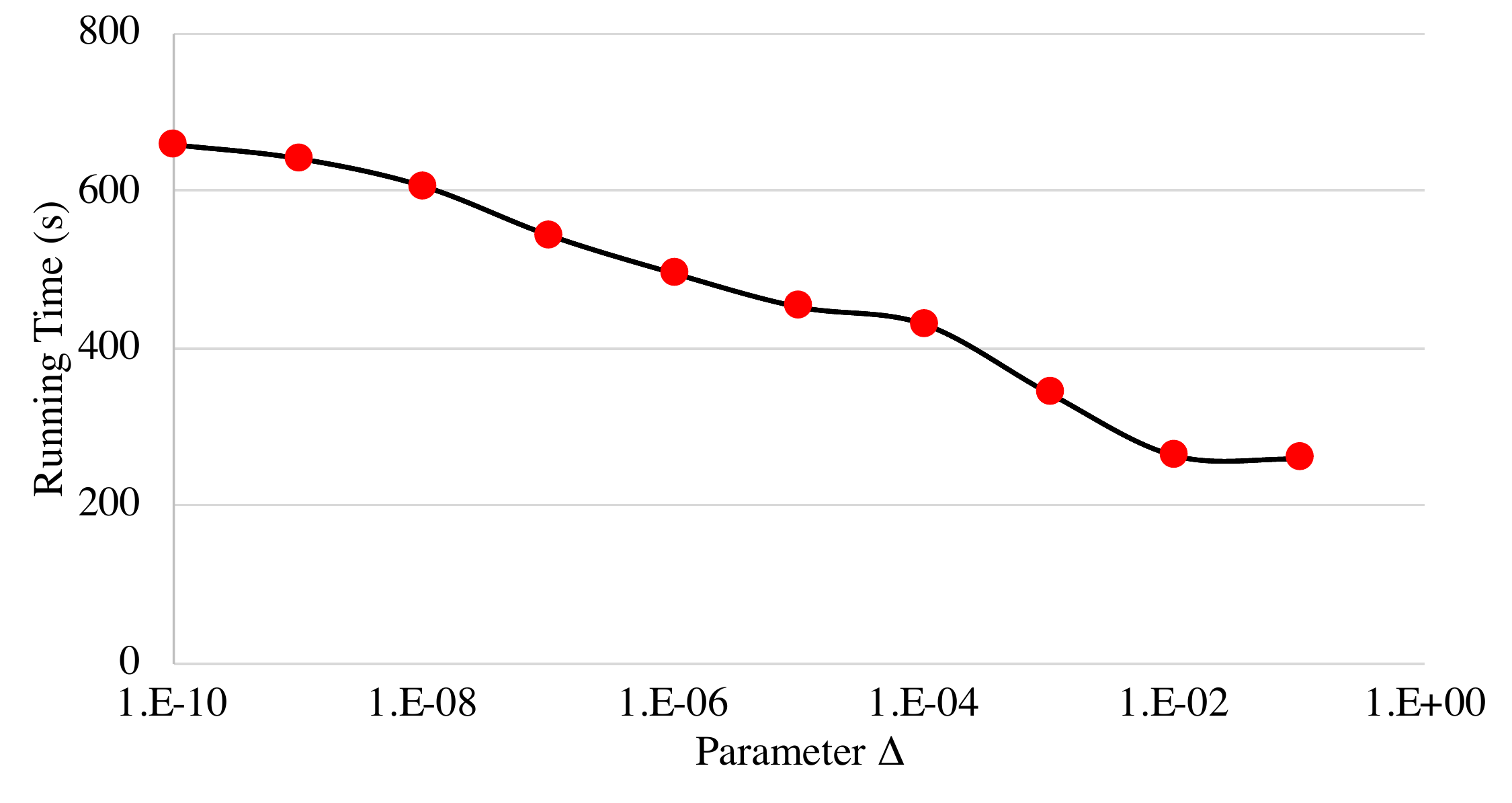} \\
(a) & (b)
\end{tabular}
\caption{We plot the running time of \toolname on the Quick Draw benchmark as a function of (a) the parameter $c$, and (b) the parameter $\Delta$. The running times in (b) are averaged over 10 runs; the running times in (a) are reported for a single run since they were too expensive to run multiple times. In each figure, a green marker denotes a response of ``fair'' and a red marker denotes a response of ``unfair''. In (a), the curve diverges because \toolname times out when $c=0.41$.}
\label{fig:quickdrawplots}
\end{figure*}

\subsection{Quick Draw Benchmark}

Our next benchmark consists of a deep recurrent neural network (RNN) classifier and a deep sequence-to-sequence variational autoencoder (VAE) population model~\cite{ha2017neural}. Recall that \toolname scales linearly with the running time of the classifier and population model; therefore, \toolname should scale to these very complex models as long as executing the model can be executed in a reasonable amount of time. In this benchmark, we use \toolname to verify the equal opportunity property described in Definition~\ref{def:equalopportunity}. Finally, we study how the running time of \toolname on this benchmark depends on various problem parameters. The results on this benchmark were run on a machine with an Intel Core i7-6700K 4GHz quad core CPU, an Nvidia GeForce GTX 745 with 4GB of GPU memory (used to run the deep neural networks), and 16GB of memory. Note that we cannot run \fairsquare on this benchmark, since it can only handle straight-line models but recurrent neural networks involve a loop operation.

\paragraph{{\bf\em Benchmark.}}

The classifier in our benchmark is an RNN $f:\X\to\{0,1\}$, where $x\in\X$ is representation of a $256\times256$ image that is a black and white sketch drawn by a human. This image is represented as a sequence of strokes $(x,y,p)$, where $(x,y)$ is the displacementand $p$ is a command (pen down, pen up, or finish drawing). Each input is a drawing of one of 345 different categories of objects, including dogs, cats, firetrucks, gardens, etc. To obtain a binary classifier, we train a classifier to predict a binary label $y\in\{0,1\}$ indicating whether the input image is a drawing of a dog. The neural network was trained on a dataset $X\subseteq\X$ containing 70K training examples, 2.5K cross-validation examples, and 2.5K test examples; overall, 0.3\% of the training images are dogs. Its accuracy is 18.8\%, which is 626$\times$ better than random. Our population model is the decoder portion of a VAE~\cite{ha2017neural}, which generates a random sequence in the form described above.

We have the country of origin for each image; we consider images from the United States to be the majority subpopulation (43.9\% of training examples), and images from other countries to be the minority subpopulation. We train two population models: (i) a decoder $d_{\text{maj}}$ trained to generate sketches of dogs from the United States, and (ii) $d_{\text{min}}$ to generate sketches of dogs from other countries.

We aim to verify the equal opportunity property. Recall that this property says that the classifier $f$ should not make mistakes (in particular, false negatives) much more frequently for members of the minority class than for members of the majority class. For example, a classifier that always responds randomly is fine, but the classifier cannot respond accurately for majority members and randomly for minority members. In the context of the Quick Draw benchmark, this fairness property says that the classifier should not perform worse for people from outside of the United States. This guarantee is important for related tasks---e.g., classifiers for detecting skin cancer from photographs~\cite{esteva2017dermatologist} and login systems based on face recognition~\cite{simon2009hp}; for example, certain face recognition systems have been shown to have more difficulty detecting minority users than detecting majority users. As before, we use parameter $c=0.15$ in the fairness specification.

\paragraph{{\bf\em Batched samples.}}

Typical deep learning frameworks are much more efficient when they operate on batches of data. Thus, we batch the samples taken by \toolname---on each iteration, it samples 1000 images $X_{\text{maj}}\sim d_{\text{maj}}$ and 1000 images $X_{\text{min}}\sim d_{\text{min}}$ as a batch, and computes $f(X_{\text{maj}})$ and $f(X_{\text{min}})$ as a batch as well. As a consequence, Algorithm~\ref{alg:verify} may sample up to 999 more images than needed, but we find that execution time improves significantly---sampling a single image takes about 0.5 seconds, whereas sampling 1000 images takes about 30 seconds, for a speed up of about $17\times$.

\paragraph{{\bf\em Results.}}

We ran \toolname on our benchmark; using $\Delta=10^{-5}$, \toolname terminates in 301 seconds and uses 14,000 samples, and using $\Delta=10^{-10}$, \toolname terminated in 606 seconds and uses 28,000 samples.

\paragraph{{\bf\em Varying $c$.}}

Besides the running time of the classifier $f$ and population models $d_{\text{maj}}$ and $d_{\text{min}}$, the most important factor affecting the running time of \toolname is the value of the parameter $c$. In particular, in the specification
\begin{align*}
Y_{\text{equal}}\equiv\left(\frac{\mu_{\rmin}}{\mu_{\rmaj}}\ge1-c\right),
\end{align*}
as the left-hand side and right-hand side of the inequality become closer together, then we need increasingly accurate estimates of $\mu_{\rmin}$ and $\mu_{\rmaj}$ to check whether the specification holds. Thus, \toolname needs to take a larger number of samples to confidently determine whether $Y_{\text{equal}}$ holds.

We ran \toolname on with values of $c$ near
\begin{align*}
c_0=1-\frac{\mu_{\rmin}}{\mu_{\rmaj}}\approx0.41,
\end{align*}
in particular, $c\in\{0.35,0.36,...,0.45\}$ (with $\Delta=10^{-5}$). In Figure~\ref{fig:quickdrawplots} (a), we plot the running time of \toolname on Quick Draw as a function of $c$. \toolname terminated for all choices of $c$ except $c=0.41$, which timed out after 96 hours. For the remaining choices of $c$, the longest running time was $c=0.40$, which terminated after 84 hours. We also show whether \toolname concludes that the specification is true (green marker) or false (red marker)---\toolname concludes that Quick Draw is fair if $c>0.41$ and unfair if $c\le0.40$.

In practice, $c$ is unlikely to be very close to $c_0$. Furthermore, approaches based on numerical integration would suffer from a similar divergence near $c=c_0$, since their estimate of $Y_{\text{equal}}$ is subject to numerical errors that must be reduced by increasing precision, which increases running time.

\paragraph{{\bf\em Varying $\Delta$.}}

We study the running time of \toolname on Quick Draw as a function of $\Delta$, which controls the probability that \toolname may respond incorrectly. In particular, we ran \toolname on Quick Draw with values $\Delta\in\{10^{-10},10^{-9},...,10^{-1}\}$ (with $c=0.15$). In Figure~\ref{fig:quickdrawplots} (b), we plot the running time of \toolname as a function of $\Delta$. As expected, the running time increases as $\Delta$ becomes smaller.
%Note that the trend is not strict since the running time of \toolname is somewhat stochastic; the running time would be strictly monotonically decreasing if we averaged the running time over many iterations.
Even using $\Delta=10^{-10}$, the running time is only about 10 minutes. In particular, \toolname scales very well as a function of $\Delta$---the running time only increases linearly even as we decrease $\Delta$ exponentially.

\section{Discussion}
\label{sec:discussion}

In this section, we discuss various aspects of our algorithm.

\paragraph{{\bf\em Population models.}}

A key input to our algorithm is the population model encoding the distribution over population members. Intuitively, population models are analogous to preconditions. Population models are required for most fairness definitions, since these definitions are typically constraints on statistical properties of the classifier for different subpopulations. Without a population model, we cannot reason about the distribution of outputs. Population models can easily be obtained by fitting a density estimation model (e.g., a GAN, Bayesian network, VAE, etc.) to the data.

An advantage of our approach compared to previous work is that we only require blackbox access to the population model. Thus, if a population model is unavailable, our tool can actually be run online as real population members arrive over time. In this setting, it may be possible that an unfair model is deployed in production for some amount of time, but our tool will eventually detect the unfairness, upon which the model can be removed.

\paragraph{{\bf\em Additional fairness specifications.}}

While we have focused on a small number of fairness specifications, many others have been proposed in the literature. Indeed, the exact notion of fairness can be context-dependent; a major benefit of our approach is that it can be applied to a wide range of specifications. For example, we can straightforwardly support other kinds of fairness for supervised learning~\cite{kleinberg2016inherent,zafar2017fairness,galhotra2017fairness}. We can also straightforwardly extend our techniques to handle multiple minority subgroups; for example, the extension of demographic parity to a set $\mathcal{M}_{\text{min}}$ of minority subgroups is
\begin{align*}
Y_{\text{parity}}\equiv\bigwedge_{m\in\mathcal{M}_{\text{min}}}\left(\frac{\mu_{R_m}}{\mu_{\rmaj}}\ge1-c\right),
\end{align*}
where $R_m=f(V_m)$ and $V_m=P_{\V}\mid A=m$. Furthermore, we can support extensions of these properties to regression and multi-class classification; for example, for regression, an analog of demographic parity is
\begin{align*}
Y_{\text{reg}}\equiv|\mu_{\rmaj}-\mu_{\rmin}|\le c,
\end{align*}
where $f:\V\to[0,1]$ is a real-valued function (where $[0,1]$ is the unit interval), $R_{\text{maj}}=f(\vmaj)$ and $R_{\text{min}}=f(\vmin)$ are as before, and $c\in\mathbb{R}_+$ is a constant.
\footnote{The constraint that $f(V)\in[0,1]$ is needed for our concentration inequality, Theorem~\ref{thm:bernoulliconcentration}, to apply.}
In other words, the outcomes for members of the majority and minority subpopulations are similar on average. In the same way, we can support extensions to the reinforcement learning setting~\cite{wen2019fairness}. We can also support counterfactual fairness~\cite{kusner2017counterfactual} and causal fairness~\cite{kilbertus2017avoiding}, which are variants of path-specific causal fairness without a mediator variable.

Another approach to fairness is \emph{individual fairness}~\cite{dwork2012fairness}, which intuitively says that people with similar observed covariates should be treated similarly. Traditionally, this notion is defined over a finite set of individuals $x,y\in\X$, in which case it says:
\begin{align}
\label{eqn:individual}
\bigwedge_{V\in\V}\bigwedge_{V'\in\V}\|f(V)-f(V')\|_1\le\lambda\cdot\|V-V'\|_1
\end{align}
where $f(x)\in\mathbb{R}^k$ are the outcomes and $\lambda\in\mathbb{R}_+$ is a given constant. This finite notion is trivial to check by enumerating over $V,V'\in\V$. We can check an extension to the case of continous $V,V'\in\V$, except where we only want Eq.~\ref{eqn:individual} to hold with high probability:
\begin{align*}
Y_{\text{ind}}\equiv(\mu_R\ge1-c),
\end{align*}
where $c\in\mathbb{R}_+$ is a constant, and where $V\sim\V$, $V'\sim\V$, and
\begin{align*}
R=\mathbb{I}\left[\|f(V)-f(V')\|_1\le\|V-V'\|_1\right].
\end{align*}
In particular, note that $R$ is a Bernoulli random variable that indicates whether Eq.~\ref{eqn:individual} holds for a random pair $V,V'\sim\V$. Thus, the specification $Y_{\text{ind}}$ says that the probability that Eq.~\ref{eqn:individual} holds for random individuals $V$ and $V'$ is at least $1-c$.

\paragraph{{\bf\em Sampling algorithm.}}

Recall that \toolname uses rejection sampling, which we find works well for typical fairness definitions. In particular, most definitions only condition on being a member of the majority or minority subpopulation, or other fairly generic qualifications. These events are not rare, so there is no need to use more sophisticated sampling techniques. We briefly discuss how our approach compares to symbolic methods, as well as a possible approach to speeding up sampling by using importance sampling.

First, we note that existing approaches based on symbolic methods---in particular, \fairsquare; see Figure 6 in \cite{albarghouthi2017fairsquare}---would also have trouble scaling to specifications that condition on rare events. The reason is that \fairsquare requires that the user provides a piecewise constant distribution $\tilde{P}_{\V}$ that approximates the true distribution $P_{\V}$. Their approach performs integration by computing regions of the input space that have high probability according to this approximate distribution $\tilde{P}_{\V}$; once an input region is chosen, it computes the actual volume according to the true distribution $P_{\V}$. Thus, if the approximation $\tilde{P}_{\V}$ is poor, then the actual volume could be much smaller than the volume according to the approximation, so their approach would also scale poorly.

Furthermore, if our algorithm has access to a good approximation $\tilde{P}_{\V}$, then we may be able to use it to speed up sampling. In particular, suppose that we have access to a piecewise constant $\tilde{P}_{\V}$, where each piece is on a polytope $A_i$ (for $i\in[h]$) of the input space, and the probability on $A_i$ is a constant $p_i\in[0,1]$. We consider the problem of sampling from $P_{\V}\mid C$, where we assume (as in \fairsquare) that the constraints $C$ are affine. In this context, we can use $\tilde{P}_{\V}$ to improve the scalability of sampling by using \emph{importance sampling}. First, to sample $\tilde{P}_{\V}\mid C$, we can efficiently compute the volume of each of constrained polytope $v_i=\text{Vol}(A_i\cap A_C)$, where $A_C$ is the polytope corresponding to the constraints $C$~\cite{lawrence1991polytope}. Next, we can directly sample $V\sim\tilde{P}_{\V}$ as follows: (i) sample a random polytope according to their probabilities according to $\tilde{P}_{\V}\mid C$, i.e., $i\sim\text{Categorical}(p_1\cdot v_1,...,p_h\cdot v_h)$, and (ii) randomly sample a point $V\sim\text{Uniform}(A_i\cap A_C)$; the second step can be accomplished efficiently~\cite{chen2018fast}. Finally, for a random variable $X$ that is a function of $V$, we have the following identity:
\begin{align*}
\mathbb{E}_{V\sim P_{\V}}[X\mid C]=\mathbb{E}_{V\sim\tilde{P}_{\V}}\left[\frac{X\cdot f_{P_{\V}}(V)}{f_{\tilde{P}_{\V}}(V)}\;\bigg\vert\;C\right],
\end{align*}
where $f_{P_{\V}}$ and $f_{\tilde{P}_{\V}}$ are the density functions of $P_{\V}$ and $\tilde{P}_{\V}$, respectively, and where we assume that the support of $\tilde{P}_{\V}$ contains the support of $P_{\V}$. Thus, the importance sampling estimator is
\begin{align}
\label{eqn:importance}
\hat{\mu}_X=\sum_{i=1}^n\frac{X_i\cdot f_{P_{\V}}(V_i)}{f_{\tilde{P}_{\V}}(V_i)},
\end{align}
for samples $V_1,...,V_n$ and where the corresponding values of $X$ are $X_1,...,X_n$. Assuming
\begin{align*}
\operatorname*{\arg\max}_{V\in\V}\frac{X\cdot f_{P_{\V}}(V)}{f_{\tilde{P}_{\V}}(V)}\le1,
\end{align*}
then Theorem~\ref{thm:bernoulliconcentration} continues to hold for Eq.~\ref{eqn:importance}; in general, it is straightforward to scale $X$ so that the theorem applies.

One caveat is that this approach is that it requires that $P_{\V}$ has bounded domain (since the support of $\tilde{P}_{\V}$ must contain the support of $P_{\V}$). Technically, the same is true for \fairsquare; in particular, since tails of most distributions are small (e.g., Gaussian distributions have doubly exponentially decaying tails), truncating the distribution yields a very good approximation. However, the \fairsquare algorithm remains sound since its upper and lower bounds account for the error due to truncation; thus, it converges as long as the truncation error is smaller than the tolerance $\varepsilon$. Similarly, we can likely bound the error for our algorithm, but we leave this approach to future work.

\paragraph{{\bf\em Limitations.}}

As we have already discussed, our algorithm suffers from several limitations. Unlike \fairsquare, it is only able to provide high-probability fairness guarantees. Nevertheless, in practice, our experiments show that we can make the failure probability vanishingly small (e.g., $\Delta=10^{-10}$). Furthermore, our termination guarantee is not absolute, and there are inputs for which our algorithm would fail to terminate (i.e., where fairness ``just barely'' holds). However, existing tools such as \fairsquare would fail to terminate on these problem instances as well. Finally, our approach would have difficulty if the events conditioned on in the population model have very low probability since it relies on rejection sampling.

\paragraph{{\bf\em Challenges for specifications beyond fairness.}}

We focus on fairness properties since sampling population models tends to be very scalable in this setting. In particular, we find that sampling the population model is usually efficient---as above, they are often learned probabilistic models, which are designed to be easy to sample. Furthermore, we find that the conditional statements in the fairness specifications usually do not encode rare events---e.g., in the case of demographic parity, we do not expect minority and majority subpopulations to be particularly rare. In more general settings, there are often conditional sampling problems that are more challenging. For these settings, more sophisticated sampling algorithms would need to be developed, possibly along the lines of what we described above.

Furthermore, our specification language is tailored to fairness specifications, which typically consist of inequalities over arithmetic formulas, and boolean formulas over these inequalities. For other specifications, other kinds of logical operators such as temporal operators may be needed.

Finally, we note that our approach cannot be applied to verifying adversarial properties such as robustness~\cite{goodfellow2014explaining}, which inherently require solving an optimization problem over the inputs of the machine learning model. In contrast, fairness properties are probabilistic in the sense that they can be expressed as expectations over the outputs of the machine learning model.

\section{Related Work}

\paragraph{{\bf\em Verifying fairness.}}

The work most closely related to ours is~\cite{albarghouthi2017fairsquare}, which uses numerical integration to verify fairness properties of machine learning models including decision trees, SVMs, and neural networks. Because they rely on constraint solving techniques (in particular, SMT solvers), their tool is substantially less scalable than ours---whereas their tool does not even scale to a neural network with 37 parameters (including those in the Bayes net population model), our tool scales to deep neural networks with 16 million parameters. In contrast to their work, our algorithm may return an incorrect result; however, in our evaluation, we show that these events are very unlikely to happen.

\paragraph{{\bf\em Checking fairness using hypothesis testing.}}

There has also been recent work on checking whether fairness holds by using hypothesis testing~\cite{galhotra2017fairness}. There are two major advantages of our work compared to their approach. First, they use $p$-values, which are asymptotic, so they cannot give any formal guarantees; furthermore, they do not account for multiple hypothesis testing, which can yield misleading results. In contrast, our approach establishes concrete, high-probability fairness guarantees. Second, their approach is tailored to a single fairness definition. In contrast, our algorithm can be used with a variety of fairness specifications (including theirs).

\paragraph{{\bf\em Fairness in machine learning.}}

There has been a large literature attempting to devise new fairness specifications, including demographic parity~\cite{calders2009building}, equal opportunity~\cite{hardt2016equality}, and approaches based on causality~\cite{kusner2017counterfactual,kilbertus2017avoiding}. There has also been a large literature focusing on how to train fair machine learning classifiers~\cite{pedreshi2008discrimination,calders2010three,dwork2012fairness,fish2016confidence,corbett2017algorithmic,dwork2018decoupled} and transforming the data into fair representations~\cite{zemel2013learning,hajian2013methodology,feldman2015certifying,calmon2017optimized}. Finally, there has been work on quantifying the influence of input variables on the output of a machine learning classifier; this technique can be used to study fairness, but does not provide any formal fairness guarantees \cite{datta2017algorithmic}. In contrast, our work takes fairness properties as given, and aims to design algorithms for verifying the correctness of existing machine learning systems, which are treated as blackbox functions.

\paragraph{{\bf\em Verifying probabilistic properties.}}

There has been a long history of work attempting to verify probabilistic properties, including program analysis~\cite{sankaranarayanan2013static,sampson2014expressing,albarghouthi2017fairsquare}, symbolic execution~\cite{geldenhuys2012probabilistic,filieri2013reliability}, and model checking~\cite{younes2002probabilistic,grosu2005monte,clarke2011statistical,kwiatkowska2002prism}. Many of these tools rely on techniques such as numerical integration, which do not scale in our setting~\cite{albarghouthi2017fairsquare}. Alternatively, abstraction interpretation has been extended to probabilistic programs~\cite{monniaux2000abstract,monniaux2001abstract,monniaux2001backwards,claret2013bayesian}; see~\cite{gordon2014probabilistic} for a survey. However, these approaches may be imprecise and incomplete (even on non-pathelogical problem instances).

\paragraph{{\bf\em Statistical model checking.}}

There has been work on using statistical hypothesis tests to check probabilistic properties~\cite{younes2002probabilistic2,younes2002probabilistic,grosu2005monte,herault2006apmc,clarke2011statistical,sankaranarayanan2013static,sampson2014expressing}.

One line of work relies on a fixed sample size~\cite{herault2004approximate,sen2004statistical,sen2005statistical,sampson2014expressing}. Then, they use a statistical test to compute a bound on the probability that the property holds. Assuming a concentration inequality such as Hoeffding's inequality is used~\cite{herault2004approximate},
\footnote{We note that Hoeffding's inequality is sometimes called the Chernoff-Hoeffding inequality. It handles an additive error $|\hat{\mu}_Z-\mu_Z|\le\varepsilon$. The variant of the bound that handles multiplicative error $|\hat{\mu}_Z-\mu_Z|\le\varepsilon)\mu_Z$ is typically called Chernoff's inequality.}
then they can obtain high-probability bounds such as ours. A key drawback is that because they do not adaptively collect data, there is a chance that the statistical test will be able to neither prove nor disprove the specification. Furthermore, simply re-running the algorithm is not statistically sound, since it runs into the problem of multiple hypothesis testing~\cite{zhao2016adaptive,johari2017peeking}.

An alternative approach that has been studied is to leverage adaptive statistical hypothesis tests---in particular, Wald's sequential probability ratio test (SPRT)~\cite{wald1945sequential}. Like the adaptive concentration inequalities used in our work, SPRT continues to collect data until the specification is either proven or disproven~\cite{younes2002probabilistic2,younes2002probabilistic,younes2004verification,younes2006statistical,legay2010statistical}. SPRT can distinguish two hypotheses of the form
\begin{align*}
H_0\equiv\mu_Z\le d_0
\hspace{0.1in}\text{ vs. }\hspace{0.1in}
H_1\equiv\mu_Z\ge d_1,
\end{align*}
where $Z$ is a Bernoulli random variable and $d_1>d_0$. There are two key shortcomings of these approaches. First, we need to distinguish $H_0$ vs. $\neg H_0$ (or equivalently, the case $d_0=d_1$). This limitation is fundamental to approaches based on Wald's test---it computes a statistic $S_0$ based on $d_0$ and a statistic $S_1$ based on $d_1$, and compares them; if $d_0=d_1$, then we always have $S_0=S_1$, so the test can never distinguish $H_0$ from $H_1$. Second, Wald's test requires that the distribution of the random variables is known (but the parameters of the distribution may be unknown). While we have made this assumption (i.e., they are Bernoulli), our techniques are much more general. In particular, we only require a bound on the random variables. Indeed, our techniques directly apply to the setting where $\rmin=f(\vmin)$ and $\rmaj=f(\vmaj)$ are only known to satisfy $\rmin,\rmaj\in[0,1]$. In particular, Theorem~\ref{thm:bernoulliconcentration} applies as stated to random variables with domain $[0,1]$.

Finally, for verifying fairness properties, we need to compare a ratio of means $\frac{\mu_{\rmin}}{\mu_{\rmaj}}$ rather than a single mean $\mu_Z$. Prior work has focused on developing inference rules for handling formulas in temporal logics such as continuous stochastic logic (CSL)~\cite{younes2002probabilistic2,sen2004statistical} and linear temporal logic (LTL)~\cite{herault2004approximate} rather than arithmetic formulas such as ours. The inference rules we develop enable us to do so.

\paragraph{{\bf\em Verifying machine learning systems.}}

More broadly, there has been a large amount of recent work on verifying machine learning systems; the work has primarily focused on verifying robustness properties of deep neural networks~\cite{goodfellow2014explaining,bastani2016measuring,katz2017reluplex,huang2017safety,tjeng2017verifying,raghunathan2018certified,gehr2018safety}. At a high level, robustness can be thought of as an optimization problem (e.g., MAX-SMT), whereas fairness properties involve integration and are therefore more similar to counting problems (e.g., COUNTING-SMT). In general, counting is harder than optimization~\cite{valiant1979complexity}, at least when asking for exact solutions. In our setting, we can obtain high-probability approximations of the counts.

\section{Conclusion}

We have designed an algorithm for verifying fairness properties of machine learning systems. Our algorithm uses a sampling-based approach in conjunction with adaptive concentration inequalities to achieve probabilistic soundness and precision guarantees. As we have shown, our implementation \toolname can scale to large machine learning models, including a deep recurrent neural network benchmark that is more than six orders of magnitude larger than the largest neural network in the \fairsquare benchmark. While we have focused on verifying fairness, we believe that our approach of using adaptive concentration inequalities can be applied to verify other probabilistic properties as well.

% %% Acknowledgments
\begin{acks}                            %% acks environment is optional
%                                         %% contents suppressed with 'anonymous'
%   %% Commands \grantsponsor{<sponsorID>}{<name>}{<url>} and
%   %% \grantnum[<url>]{<sponsorID>}{<number>} should be used to
%   %% acknowledge financial support and will be used by metadata
%   %% extraction tools.
%   This material is based upon work supported by the
%   \grantsponsor{GS100000001}{National Science
%     Foundation}{http://dx.doi.org/10.13039/100000001} under Grant
%   No.~\grantnum{GS100000001}{nnnnnnn} and Grant
%   No.~\grantnum{GS100000001}{mmmmmmm}.  Any opinions, findings, and
%   conclusions or recommendations expressed in this material are those
%   of the author and do not necessarily reflect the views of the
%   National Science Foundation.
This work was supported by ONR N00014-17-1-2699.
\end{acks}

%% Bibliography
\bibliography{paper}

\clearpage

%% Appendix
\appendix
\section{Proofs of Theoretical Results}

We prove a number of correctness results for Algorithm~\ref{alg:verify}.

\subsection{Proof of Theorem~\ref{thm:bernoulliconcentration}}
\label{sec:bernoulliconcentrationproof}

First, we have the following well-known definition, which is a key component for the adaptive concentration inequality we use~\cite{zhao2016adaptive}.
\begin{definition}
A random variable $Z$ is $d$-\emph{subgaussian} if $\mu_Z=0$ and
\begin{align*}
  \mathbb{E}[e^{rZ}]\le e^{d^2r^2/2}
\end{align*}
for all $r\in\mathbb{R}$.
\end{definition}
\noindent
\begin{theorem}
\label{thm:concentration}
Suppose that $Z$ is a $\frac{1}{2}$-subgaussian random variable with probability distribution $P_{\Z}$. Let
\begin{align*}
\hat{\mu}_Z^{(n)}=\frac{1}{n}\sum_{i=1}^nZ_i,
\end{align*}
where $\{Z_i\sim P_{\Z}\}_{i\in\mathbb{N}}$ are i.i.d. samples from $P_{\Z}$, let $J$ be a random variable on $\mathbb{N}\cup\{\infty\}$, let
\begin{align*}
\varepsilon_b(n)=\sqrt{\frac{\frac{3}{5}\cdot\log(\log_{11/10}n+1)+b}{n}}
\end{align*}
for some constant $b\in\mathbb{R}$, and let $\delta_b=24e^{-9b/5}$. Then,
\begin{align*}
\text{Pr}[J<\infty\wedge(|\hat{\mu}_Z^{(J)}|\ge\varepsilon_b(J))]\le\delta_b.
\end{align*}
\end{theorem}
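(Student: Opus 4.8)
The plan is to follow the peeling (stratification) strategy of \cite{zhao2016adaptive}: reduce the statement about the data-dependent index $J$ to a uniform-in-$n$ concentration bound, and then control that bound scale by scale. First I would observe that $\{J<\infty\wedge|\hat{\mu}_Z^{(J)}|\ge\varepsilon_b(J)\}\subseteq\{\exists n\in\mathbb{N}:|\hat{\mu}_Z^{(n)}|\ge\varepsilon_b(n)\}$, since on $\{J<\infty\}$ we have $J=n$ for some $n$; this eliminates $J$ entirely, so it suffices to bound the probability of the uniform event. Writing $S_n=n\hat{\mu}_Z^{(n)}=\sum_{i=1}^nZ_i$ and noting that $-Z$ is also $\tfrac12$-subgaussian, I would bound the one-sided event $\{\exists n:S_n\ge n\varepsilon_b(n)\}$ and double the result, so that the target per side is $12e^{-9b/5}$.

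The core is an exponential supermartingale combined with a maximal inequality. For any $r>0$, the process $M_n^{(r)}=\exp(rS_n-nr^2/8)$ is a nonnegative supermartingale with $M_0^{(r)}=1$: the $\tfrac12$-subgaussian hypothesis gives $\mathbb{E}[e^{rZ}]\le e^{r^2/8}$, whence $\mathbb{E}[M_{n+1}^{(r)}\mid\mathcal{F}_n]\le M_n^{(r)}$. Ville's (Doob's) maximal inequality then yields $\text{Pr}[\sup_nM_n^{(r)}\ge\lambda]\le1/\lambda$ for every $\lambda>0$. The difficulty is that a single $r$ cannot calibrate the curved threshold $n\varepsilon_b(n)=\sqrt{n(\tfrac35\log(\log_{11/10}n+1)+b)}$ across all scales, so I would peel $\mathbb{N}$ into the geometric blocks $B_k=\{n:(11/10)^{k-1}\le n<(11/10)^k\}$, $k\ge1$. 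Within $B_k$ the slowly varying term is essentially constant: $\log(\log_{11/10}n+1)\ge\log k$, hence $n\varepsilon_b(n)\ge\sqrt{n\gamma_k}$ with $\gamma_k=\tfrac35\log k+b$.

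For block $B_k$ I would take $r_k$ of order $\sqrt{\gamma_k/M_k}$ with $M_k=(11/10)^k$, tuned so that the map $r\mapsto r\sqrt{n\gamma_k}-nr^2/8$ is increasing across $B_k$ and its minimum over the block equals $\tfrac95\gamma_k$; since $n/M_k$ ranges only over $[10/11,1]$, this is a one-line optimization. Consequently, whenever $S_n\ge\sqrt{n\gamma_k}$ for some $n\in B_k$ we have $M_n^{(r_k)}\ge e^{9\gamma_k/5}$, and the maximal inequality gives $\text{Pr}[\exists n\in B_k:S_n\ge n\varepsilon_b(n)]\le e^{-9\gamma_k/5}=e^{-9b/5}\,k^{-27/25}$. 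Summing over $k\ge1$ converges because $\tfrac95\cdot\tfrac35=\tfrac{27}{25}>1$, and the block ratio $11/10$ together with the tuned $r_k$ is exactly what forces the exponent $9/5$; bounding the resulting series and adding the symmetric side yields the constant $24$ in $\delta_b=24e^{-9b/5}$.

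I expect the \textbf{main obstacle} to be this final calibration: verifying that the per-block exponential bound holds \emph{uniformly} over the factor-$(11/10)$ spread of $n$ in each block, and checking the elementary but delicate inequalities that pin the exponent to precisely $9/5$ while keeping the block sum below the claimed constant. Everything else---the reduction away from $J$, the symmetrization, and the supermartingale/maximal-inequality step---is routine once the geometric block decomposition and the scale-matched choice of $r_k$ are fixed.
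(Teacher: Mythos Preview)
The paper does not prove this theorem at all: it is stated in Appendix~A.1 purely as a quotation of the adaptive concentration inequality from \cite{zhao2016adaptive}, and is immediately used (without argument) to derive Theorem~\ref{thm:refinedbernoulliconcentration} by observing that a centered Bernoulli is $\tfrac12$-subgaussian. So there is no ``paper's own proof'' to compare against; your proposal is already doing strictly more than the paper attempts.

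That said, the route you sketch is precisely the one underlying \cite{zhao2016adaptive}: reduce the stopped event to the uniform event $\{\exists n:|\hat{\mu}_Z^{(n)}|\ge\varepsilon_b(n)\}$, symmetrize, build the exponential supermartingale $\exp(rS_n-nr^2/8)$ from the $\tfrac12$-subgaussian bound $\mathbb{E}[e^{rZ}]\le e^{r^2/8}$, apply Ville's maximal inequality, and peel time into geometric blocks of ratio $11/10$ with a block-adapted tilt $r_k$. Your identification of the main obstacle is also accurate: the only place real work remains is the per-block calibration---showing that with $r_k$ chosen of order $\sqrt{\gamma_k/M_k}$ the exponent $r_k\sqrt{n\gamma_k}-nr_k^2/8$ stays at least $\tfrac95\gamma_k$ uniformly over $n/M_k\in[10/11,1]$, and then checking that $\sum_{k\ge1}k^{-27/25}$ (plus the symmetric side) fits under the constant $24$. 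These are genuine but elementary calculus inequalities, and once carried through your outline yields the statement. In short: your plan is correct and faithful to the source the paper cites; the paper itself simply imports the result.
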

Using this result, we first prove the following slight variant of Theorem~\ref{thm:bernoulliconcentration}, which accounts for the case $\text{Pr}[J<\infty]<1$.
\begin{theorem}
\label{thm:refinedbernoulliconcentration}
Given a Bernoulli random variable $Z$ with probability distribution $P_{\Z}$, let $\{Z_i\sim P_{\Z}\}_{i\in\mathbb{N}}$ be i.i.d. samples of $Z$, let
\begin{align*}
\hat{\mu}_Z^{(n)}=\frac{1}{n}\sum_{i=1}^nZ_i,
\end{align*}
let $J$ be a random variable on $\mathbb{N}\cup\{\infty\}$, and let
\begin{align*}
  \varepsilon(\delta,n)=\sqrt{\frac{\frac{3}{5}\cdot\log(\log_{11/10}n+1)+\frac{5}{9}\cdot\log(24/\delta)}{n}}
\end{align*}
for a given $\delta\in\mathbb{R}_+$. Then,
\begin{align*}
\text{Pr}[J<\infty\wedge(|\hat{\mu}_Z^{(J)}-\mu_Z|\ge\varepsilon(\delta,J))]\le\delta.
\end{align*}
\end{theorem}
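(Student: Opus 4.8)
The plan is to reduce Theorem~\ref{thm:refinedbernoulliconcentration} to the subgaussian concentration bound of Theorem~\ref{thm:concentration} by centering the samples. First I would define $W_i = Z_i - \mu_Z$ and $\hat{\mu}_W^{(n)} = \frac{1}{n}\sum_{i=1}^n W_i$, so that $\hat{\mu}_W^{(n)} = \hat{\mu}_Z^{(n)} - \mu_Z$ and hence the event $|\hat{\mu}_Z^{(J)} - \mu_Z| \ge \varepsilon(\delta, J)$ coincides with $|\hat{\mu}_W^{(J)}| \ge \varepsilon(\delta, J)$. The $W_i$ are i.i.d. with $\mu_W = 0$, so Theorem~\ref{thm:concentration} applies verbatim provided each $W_i$ is $\frac{1}{2}$-subgaussian.

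The key step is verifying this subgaussian property. Since $Z_i \in \{0,1\}$, the centered variable $W_i$ takes values in $[-\mu_Z,\, 1-\mu_Z]$, an interval of length exactly $1$. By Hoeffding's lemma, any mean-zero random variable supported on an interval of length $L$ satisfies $\mathbb{E}[e^{rW}] \le e^{r^2 L^2 / 8}$ for all $r \in \mathbb{R}$; with $L = 1$ this reads $\mathbb{E}[e^{rW}] \le e^{r^2/8} = e^{(1/2)^2 r^2 / 2}$, which is precisely the $\frac{1}{2}$-subgaussian condition in the definition. Thus the hypotheses of Theorem~\ref{thm:concentration} hold for $W$.

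It then remains to match the parameters. Given the target $\delta \in \mathbb{R}_+$, I would choose $b = \frac{5}{9}\log(24/\delta)$. Substituting into $\delta_b = 24 e^{-9b/5}$ gives $\delta_b = 24 e^{-\log(24/\delta)} = \delta$, while substituting the same $b$ into the definition of $\varepsilon_b$ makes it agree term by term with $\varepsilon(\delta, n)$, so $\varepsilon_b(n) = \varepsilon(\delta, n)$ for all $n$. Applying Theorem~\ref{thm:concentration} to $W$ with this $b$ yields $\text{Pr}[J < \infty \wedge (|\hat{\mu}_W^{(J)}| \ge \varepsilon_b(J))] \le \delta_b$, which by the identifications above is exactly $\text{Pr}[J < \infty \wedge (|\hat{\mu}_Z^{(J)} - \mu_Z| \ge \varepsilon(\delta, J))] \le \delta$.

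There is no deep obstacle here; the argument is a routine reduction. The one point requiring care is the subgaussian verification, specifically getting the constant right: Hoeffding's lemma produces the factor $L^2/8$, and one must confirm this matches the $d^2/2$ in the definition with $d = \frac{1}{2}$ rather than $d = 1$. It is precisely the fact that a Bernoulli variable has centered range exactly $1$ that makes Theorem~\ref{thm:concentration}'s constant $\frac{1}{2}$ the correct one to invoke, and this is the only place where the Bernoulli hypothesis is actually used.
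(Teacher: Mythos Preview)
Your proposal is correct and follows essentially the same route as the paper: center the Bernoulli samples, observe that the centered variable is $\tfrac{1}{2}$-subgaussian because it lives in an interval of length $1$, and then invoke Theorem~\ref{thm:concentration} with $b=\tfrac{5}{9}\log(24/\delta)$ so that $\delta_b=\delta$ and $\varepsilon_b=\varepsilon(\delta,\cdot)$. The only cosmetic difference is that the paper cites the bounded-support $\Rightarrow$ subgaussian fact from~\cite{zhao2016adaptive} rather than naming Hoeffding's lemma explicitly.
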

\begin{proof}
  As described in~\cite{zhao2016adaptive}, any distribution bounded in an interval of length $2d$ is $d$-subgaussian. Thus, for any Bernoulli random variable $Z$, the random variable $Z-\mu_Z$ is $\frac{1}{2}$-subgassian. Then, the claim follows by applying Theorem~\ref{thm:concentration} (noting that $b=\frac{5}{9}\cdot\log(24/\delta_b)$).
\end{proof}

Note that Theorem~\ref{thm:bernoulliconcentration} follows immediately from Theorem~\ref{thm:refinedbernoulliconcentration} since it assumes that $\text{Pr}[J<\infty]=1$, so this term can be dropped from the probability event.
\qed

\subsection{Proof of Theorem~\ref{thm:sound}}
\label{sec:soundproof}

We prove by structural induction on the derivation.

\paragraph{Random variable.}

This case follows by our assumption that the initial environment $\Gamma$ is correct.

\paragraph{Constant.}

This case follows by definition since a constant $c$ satisfies $\llbracket c\rrbracket=c$.

\paragraph{Sum.}

By assumption, $|E-\llbracket X\rrbracket|\le\varepsilon$ with probability at least $1-\delta$, and $|E'-\llbracket X'\rrbracket|\le\varepsilon'$ with probability at least $1-\delta'$. By a union bound, both of these hold with probability at least $1-(\delta+\delta')$. Then,
\begin{align*}
|(E+E')-\llbracket X+X'\rrbracket|
&=|(E+E')-(\llbracket X\rrbracket+\llbracket X'\rrbracket)| \\
&\le|E-\llbracket X\rrbracket|+|E'-\llbracket X'\rrbracket| \\
&\le\varepsilon+\varepsilon'.
\end{align*}
In other words, we can conclude that $X+X':(E+E',\varepsilon+\varepsilon',\delta+\delta')$.

\paragraph{Negative.}

By assumption, $|E-\llbracket X\rrbracket|\le\varepsilon$ with probability at least $1-\delta$. Then,
\begin{align*}
|(-E)-\llbracket-X\rrbracket)|=|E-\llbracket X\rrbracket|\le\varepsilon
\end{align*}
In other words, we can conclude that $-X:(-E,\varepsilon,\delta)$.

\paragraph{Product.}

By assumption, $|E-\llbracket X\rrbracket|\le\varepsilon$ with probability at least $1-\delta$, and $|E'-\llbracket X'\rrbracket|\le\varepsilon'$ with probability at least $1-\delta'$. a union bound, both of these hold with probability at least $1-(\delta+\delta')$. Then,
\begin{align*}
|E'-E'+\llbracket X'\rrbracket|
&=|E'-E'+\llbracket X'\rrbracket| \\
&\le|E'|+|-E'+\llbracket X'\rrbracket| \\
&\le|E'|+\varepsilon',
\end{align*}
so
\begin{align*}
&|E\cdot E'-\llbracket X\cdot X'\rrbracket| \\
&=|E\cdot E'-\llbracket X\rrbracket\cdot\llbracket X'\rrbracket| \\
&=|E\cdot E'-E\cdot\llbracket X'\rrbracket+E\cdot\llbracket X'\rrbracket-\llbracket X\rrbracket\cdot\llbracket X'\rrbracket| \\
&=|E\cdot(E'-\llbracket X'\rrbracket)+\llbracket X'\rrbracket\cdot(E-\llbracket X\rrbracket)| \\
&\le|E|\cdot|E'-\llbracket X'\rrbracket|+|\llbracket X'\rrbracket|\cdot|E-\llbracket X\rrbracket| \\
&\le|E|\cdot\varepsilon'+|\llbracket X'\rrbracket|\cdot\varepsilon \\
&\le|E|\cdot\varepsilon'+(|E'|+\varepsilon')\cdot\varepsilon \\
&=|E|\cdot\varepsilon'+|E'|\cdot\varepsilon+\varepsilon\cdot\varepsilon'.
\end{align*}
In other words, we can conclude that $X\cdot X':(E\cdot E',E\cdot\varepsilon'+E'\cdot\varepsilon+\varepsilon\cdot\varepsilon',\delta+\delta')$.

\paragraph{Inverse.}

By assumption, $|E-\llbracket X\rrbracket|\le\varepsilon$ with probability at least $1-\delta$. Then,
\begin{align*}
|E|
&=|E-\llbracket X\rrbracket+\llbracket X\rrbracket| \\
&\le|E-\llbracket X\rrbracket|+|\llbracket X\rrbracket| \\
&\le\varepsilon+|\llbracket X\rrbracket|,
\end{align*}
i.e., $|\llbracket X\rrbracket|\ge|E|-\varepsilon$, so
\begin{align*}
|E^{-1}-\llbracket X^{-1}\rrbracket|
&=|E^{-1}-\llbracket X\rrbracket^{-1}| \\
&=\left|\frac{\llbracket X\rrbracket-E}{E\cdot\llbracket X\rrbracket}\right| \\
&\le\frac{\varepsilon}{|E|\cdot|\llbracket X\rrbracket|} \\
&\le\frac{\varepsilon}{|E|\cdot(|E|-\varepsilon)},
\end{align*}
where the last step follows since we have assumed that $|E|-\varepsilon>0$. In other words, we can conclude that $X^{-1}:(E^{-1},\frac{\varepsilon}{|E|\cdot(|E|-\varepsilon)},\delta)$.

\paragraph{Inequality true.}

By assumption, $|E-\llbracket X\rrbracket|\le\varepsilon$ with probability at least $1-\delta$, and furthermore $E-\varepsilon\ge0$. Thus,
\begin{align*}
E-\llbracket X\rrbracket\le\varepsilon,
\end{align*}
or equivalently,
\begin{align*}
\llbracket X\rrbracket\ge E-\varepsilon\ge0.
\end{align*}
In other words, we can conclude that $X\ge0:(\true,\delta)$.

\paragraph{Inequality false.}

By assumption, $|E-\llbracket X\rrbracket|\le\varepsilon$ with probability at least $1-\delta$, and furthermore $E+\varepsilon<$. Thus,
\begin{align*}
\llbracket X\rrbracket-E\le\varepsilon,
\end{align*}
or equivalently,
\begin{align*}
\llbracket X\rrbracket\le E+\varepsilon<0.
\end{align*}
In other words, we can conclude that $X\ge0:(\false,\delta)$.

\paragraph{And.}

By assumption, $\llbracket Y\rrbracket=I$ with probability at least $1-\delta$, and $\llbracket Y'\rrbracket=I'$ with probability at least $1-\delta'$. a union bound, both of these hold with probability at least $1-(\delta+\delta')$. Then,
\begin{align*}
\llbracket Y\wedge Y'\rrbracket=\llbracket Y\rrbracket\wedge\llbracket Y'\rrbracket=I\wedge I'.
\end{align*}
In other words, we can conclude that $Y\wedge Y':(I\wedge I',\delta+\delta')$.

\paragraph{Or.}

By assumption, $\llbracket Y\rrbracket=I$ with probability at least $1-\delta$, and $\llbracket Y'\rrbracket=I'$ with probability at least $1-\delta'$. a union bound, both of these hold with probability at least $1-(\delta+\delta')$. Then,
\begin{align*}
\llbracket Y\vee Y'\rrbracket=\llbracket Y\rrbracket\vee\llbracket Y'\rrbracket=I\vee I'.
\end{align*}
In other words, we can conclude that $Y\vee Y':(I\vee I',\delta+\delta')$.

\paragraph{Not.}

By assumption, $\llbracket Y\rrbracket=I$ with probability at least $1-\delta$. Then,
\begin{align*}
\llbracket\neg Y\rrbracket=\neg\llbracket Y\rrbracket=\neg I.
\end{align*}
In other words, we can conclude that $\neg Y:(\neg I,\delta)$.
\qed

\subsection{Proof of Theorem~\ref{thm:inferenceterminate}}
\label{sec:inferenceterminateproof}

First, we prove the following stronger lemma, which says that as $n\to\infty$ (where $n$ is the number of samples), our algorithm eventually infers arbitrarily tight bounds on any given well-defined problem instance. Then, Theorem~\ref{thm:inferenceterminate} follows from the applying this lemma to the given specification $Y$ and $\gamma=\Delta$, where $\Delta\in\mathbb{R}_+$ is the given confidence level.
\begin{lemma}
\label{lem:bound}
Given any well-defined problem instance $(P_{\Z},X)$, where $X\in\mathcal{L}(T)$, and any $\delta\in\mathbb{R}_+$, let
\begin{align*}
  \Gamma^{(n)}=\{\mu_Z:(E^{(n)},\varepsilon(\delta_Z,n),\delta_Z)\}
\end{align*}
where
\begin{align*}
  E^{(n)}&=\frac{1}{n}\sum_{i=1}^nZ_i \\
  \varepsilon(\delta_Z,n)&=\sqrt{\frac{\frac{3}{5}\cdot\log(\log_{11/10}n+1)+\frac{5}{9}\cdot\log(24/\delta_Z)}{n}} \\
  \delta_Z&=\delta/\llbracket X\rrbracket_{\delta}.
\end{align*}
Intuitively, $\Gamma^{(n)}$ is the lemma established for $\mu_Z$ on the $n$th iteration of Algorithm~\ref{alg:verify}. Then, for any $\varepsilon\in\mathbb{R}_+$ and any $\varepsilon_0,\delta_0\in\mathbb{R}_+$, there exists $n_0\in\mathbb{N}$ such that for any $n\ge n_0$, with probability at least $1-\delta_0$, so
\begin{align*}
  \Gamma^{(n)}\vdash X:(E,\varepsilon,\delta)
\end{align*}
for some $E\in\mathbb{R}$ such that $|E-\llbracket X\rrbracket|\le\varepsilon_0$. We are allowed to make the given values $\varepsilon,\delta,\varepsilon_0,\delta_0$ smaller.

Similarly, given any well-defined problem-instance $(P_{\Z},Y)$, where $Y\in\mathcal{L}(S)$ and any $\gamma\in\mathbb{R}_+$, let
\begin{align*}
  \Gamma^{(n)}=\{\mu_Z:(E^{(n)},\varepsilon(\delta_Z,n),\delta_Z)\}
\end{align*}
as before. Then, for any $\gamma\in\mathbb{R}_+$ and $\delta_0\in\mathbb{R}_+$, there exists $n_0\in\mathbb{N}$ such that for all $n\ge n_0$, with probability at least $1-\delta_0$, so
\begin{align*}
  \Gamma^{(n)}\vdash Y:(\llbracket Y\rrbracket,\gamma).
\end{align*}
Again, we are allowed to make the given values $\gamma,\delta_0$ smaller.
\end{lemma}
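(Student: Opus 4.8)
The plan is to prove both parts of Lemma~\ref{lem:bound} by structural induction on the expression $X\in\mathcal{L}(T)$ (resp. the formula $Y\in\mathcal{L}(S)$), resting on two elementary quantitative facts. First, for the fixed value $\delta_Z=\delta/\llbracket X\rrbracket_{\delta}$, the confidence radius $\varepsilon(\delta_Z,n)$ of Eq.~\ref{eqn:epsilon} tends to $0$ as $n\to\infty$, since its numerator grows like $\log\log n$ while its denominator grows like $n$. Second, for each fixed $n$, Hoeffding's inequality (Eq.~\ref{eqn:hoeffding}) gives $\text{Pr}[|E^{(n)}-\mu_Z|\le\eta]\ge 1-2e^{-2n\eta^2}$, so for any tolerance $\eta$ and confidence $\delta_0$ this probability exceeds $1-\delta_0$ once $n$ is large enough; this is a legitimate use of a \emph{fixed}-$n$ bound and does not invoke the adaptive inequality of Theorem~\ref{thm:bernoulliconcentration}. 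I would also note at the outset that, because each $\mu_Z$-leaf contributes exactly $\delta_Z$, constants contribute $0$, and $\llbracket X\rrbracket_{\delta}$ counts precisely these leaves, every derivation produces a triple whose third component equals $\llbracket X\rrbracket_{\delta}\cdot\delta_Z=\delta$ (resp. $\gamma$); hence the $\delta$-component (resp. $\gamma$-component) of the lemma requires no separate argument.

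For the base cases, the rule for $\mu_Z$ always fires, so I would invoke the two facts above to get $\varepsilon(\delta_Z,n)\le\varepsilon$ together with $|E^{(n)}-\mu_Z|\le\varepsilon_0$ with probability at least $1-\delta_0$ for $n$ large; the rule for a constant $c$ is deterministic with zero radius and exact estimate. For the inductive cases (sum, negation, product, inverse) I would apply the induction hypothesis to each subexpression with suitably reduced targets, e.g. $(\varepsilon/2,\varepsilon_0/2,\delta_0/2)$, and combine the resulting lemmas via the corresponding rule in Figure~\ref{fig:inference}, paying the total $\delta_0$ through a union bound over the (at most two) sub-events. The essential point is a continuity/shrinking-radius observation: each rule computes the derived estimate and derived radius as a fixed continuous function of the sub-estimates and sub-radii (e.g. $|E|\cdot\varepsilon'+|E'|\cdot\varepsilon+\varepsilon\cdot\varepsilon'$ for products and $\varepsilon/(|E|\cdot(|E|-\varepsilon))$ for inverses), so as the sub-estimates converge to the true subvalues and the sub-radii shrink to $0$, the derived estimate converges to $\llbracket X\rrbracket$ and the derived radius shrinks to $0$. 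For the inverse rule one additionally needs the side condition $|E|>\varepsilon$; this holds for large $n$ precisely because well-definedness guarantees $\llbracket X\rrbracket\neq 0$ for the relevant subexpression, while $E\to\llbracket X\rrbracket$ and $\varepsilon\to 0$.

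For Part~2 I would induct on $Y$. The base case is an inequality $X\ge 0$: applying Part~1 to $X$ with small enough $\varepsilon,\varepsilon_0$ yields an estimate $E$ with $|E-\llbracket X\rrbracket|\le\varepsilon_0$ and small radius $\varepsilon$; since well-definedness forces $\llbracket X\rrbracket\neq 0$, for small enough tolerances we obtain either $E-\varepsilon\ge 0$ (when $\llbracket X\rrbracket>0$) or $E+\varepsilon<0$ (when $\llbracket X\rrbracket<0$), so exactly one inequality rule fires and produces $Y:(\llbracket Y\rrbracket,\delta)$ in agreement with the semantics of Figure~\ref{fig:specificationlanguage}. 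The connectives $\wedge,\vee,\neg$ are handled by applying the induction hypothesis to subformulas (union bound for $\delta_0$) and combining via the corresponding rules, using that $\llbracket\cdot\rrbracket$ is homomorphic over these connectives so the inferred indicator equals $\llbracket Y\rrbracket$.

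The main obstacle I anticipate is the bookkeeping in the inductive step: one must propagate the reduced accuracy and confidence targets through the rules so that the final derived radius drops below $\varepsilon$ and the final estimate stays within $\varepsilon_0$, while simultaneously ensuring that every side condition---the $|E|>\varepsilon$ at inverses and the sign conditions at inequalities---is eventually met. Each of these ``eventually'' claims reduces to the two quantitative facts ($\varepsilon(\delta_Z,n)\to 0$ and $E^{(n)}\to\mu_Z$), but converting the finitely many qualitative eventualities into a single threshold $n_0$ valid for the entire syntax tree, and checking that well-definedness supplies exactly the nonvanishing guarantees required at inverses and inequalities, is where the care lies.
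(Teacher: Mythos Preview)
Your proposal is correct and follows essentially the same approach as the paper's proof: structural induction on the inference rules, with the two quantitative facts you identify ($\varepsilon(\delta_Z,n)\to 0$ and Hoeffding's inequality for fixed $n$) driving the base case, well-definedness supplying the nonvanishing guarantees at inverses and inequalities, and the $\delta$-bookkeeping handled by the count $\llbracket X\rrbracket_{\delta}$. The paper's proof differs only in presentation---it works out explicit choices of the reduced targets $\tilde{\varepsilon},\tilde{\delta},\tilde{\varepsilon}_0,\tilde{\delta}_0$ for the inverse and inequality cases rather than appealing to continuity---but the underlying argument is the same.
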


\begin{proof}
We prove by structural induction on the inference rules in Figure~\ref{fig:inference}, focusing on the following cases of interest: random variables, inverses, and inequalities; the remaining cases follow similarly.

\paragraph{{\bf\em Random variable.}}

Consider the specification $\mu_Z$, and let $\varepsilon,\delta,\varepsilon_0,\delta_0\in\mathbb{R}_+$ be given. Note that as $n\to\infty$, we have $\varepsilon(\delta_Z,n)\to0$; furthermore, $\delta_Z=\delta/\llbracket\mu_Z\rrbracket_{\delta}=\delta$. Thus, it suffices to prove that as $E^{(n)}\to\mu_Z$ as $n\to\infty$ as well. To this end, let
\begin{align*}
  n_0=\frac{\log(2/\delta_0)}{2(\varepsilon_0)^2}.
\end{align*}
By Hoeffding's inequality,
\begin{align*}
  \text{Pr}[|E^{(n)}-\mu_Z|\le\varepsilon_0]\ge1-2e^{-2n\varepsilon_0^2}\ge1-2e^{-2n_0\varepsilon_0^2}=1-\delta_0,
\end{align*}
as claimed.

\paragraph{{\bf\em Inverse.}}

Consider the specification $X^{-1}$, and let $\varepsilon,\delta,\varepsilon_0,\delta_0\in\mathbb{R}^+$ be given. Because we have assumed that the problem instance is well-defined, we must have $\llbracket X\rrbracket\neq0$. Let $\alpha=|\llbracket X\rrbracket|$, and let
\begin{align*}
  \tilde{\varepsilon}&=\min\left\{\frac{\varepsilon\cdot(\alpha/2)^2}{1+\varepsilon\cdot(\alpha/2)},~\frac{\alpha}{2}\right\} \\
  \tilde{\delta}&=\delta \\
  \tilde{\varepsilon}_0&=\min\left\{\frac{\alpha}{4},~\frac{\varepsilon_0\cdot\alpha^2}{2}\right\} \\
  \tilde{\delta}_0&=\delta_0.
\end{align*}
Note that $\delta_Z=\delta/\llbracket X^{-1}\rrbracket_{\delta}=\tilde{\delta}/\llbracket X\rrbracket_{\delta}$. Therefore, by induction, there exists $n_0\in\mathbb{N}$ such that for all $n\ge n_0$, with probability at least $1-\tilde{\delta}_0=1-\delta_0$, our algorithm proves the lemma
\begin{align*}
  \Gamma^{(n)}\vdash X:(\tilde{E},\tilde{\varepsilon},\tilde{\delta}),
\end{align*}
where $|\tilde{E}-\llbracket X\rrbracket||\le\tilde{\varepsilon}_0$. Then, note that
\begin{align*}
  \frac{\alpha}{2}>\tilde{\varepsilon}_0
  &\ge|\tilde{E}-\llbracket X\rrbracket| \\
  &\ge|\llbracket X\rrbracket|-|\tilde{E}| \\
  &\ge\alpha-|\tilde{E}|,
\end{align*}
from which it follows that
\begin{align*}
  |\tilde{E}|>\frac{\alpha}{2}\ge\tilde{\varepsilon}.
\end{align*}
Thus, the inference rule for inverses applies, so Algorithm~\ref{alg:verify} proves the lemma
\begin{align*}
  \Gamma^{(n)}\vdash X^{-1}:\left(\tilde{E}^{-1},\frac{\tilde{\varepsilon}}{|\tilde{E}|(|\tilde{E}|-\tilde{\varepsilon})},\tilde{\delta}\right).
\end{align*}
Next, note that
\begin{align*}
  \tilde{\varepsilon}\le\frac{\varepsilon\cdot(\alpha/2)^2}{1+\varepsilon\cdot(\alpha/2)}\le\frac{\varepsilon\cdot(\alpha/2)\cdot|\tilde{E}|}{1+\varepsilon\cdot(\alpha/2)},
\end{align*}
from which it follows that
\begin{align*}
  \varepsilon\ge\frac{\tilde{\varepsilon}}{(\alpha/2)\cdot(|\tilde{E}|-\tilde{\varepsilon})}\ge\frac{\tilde{\varepsilon}}{|\tilde{E}|(|\tilde{E}|-\tilde{\varepsilon})}.
\end{align*}
Furthermore, we have $\tilde{\delta}\le\delta$. Finally, note that
\begin{align*}
  |\tilde{E}^{-1}-\llbracket X\rrbracket^{-1}|
  &=\left|\frac{\tilde{E}-\llbracket X\rrbracket}{\tilde{E}\cdot\llbracket X\rrbracket}\right| \\
  &\le\frac{\tilde{\varepsilon}_0}{\alpha^2/2} \\
  &\le\varepsilon_0,
\end{align*}
which holds with probability at least $\delta_0\le\tilde{\delta_0}$. Note that we can make $\varepsilon$ and $\varepsilon_0$ smaller so that
\begin{align*}
  \Gamma^{(n)}\vdash X^{-1}:(E,\varepsilon,\delta),
\end{align*}
where $E=\tilde{E}^{-1}$ satisfies $|E-\llbracket X^{-1}\rrbracket|\le\varepsilon_0$, so the claim follows.

\paragraph{{\bf\em Inequality.}}

Consider the specification $X\ge0$, and let $\gamma,\delta_0\in\mathbb{R}_+$ be given. Let $\alpha=|\llbracket X\rrbracket|$, and let
\begin{align*}
  \tilde{\varepsilon}&=\frac{\alpha}{3} \\
  \tilde{\delta}&=\gamma \\
  \tilde{\varepsilon}_0&=\frac{\alpha}{3} \\
  \tilde{\delta}_0&=\delta_0.
\end{align*}
Note that $\delta_Z=\gamma/\llbracket X\ge0\rrbracket_{\delta}=\tilde{\delta}/\llbracket X\rrbracket_{\delta}$. Therefore, by induction, there exists $n_0\in\mathbb{N}$ such that for any $n\ge n_0$, with probability at least $1-\tilde{\delta}_0=1-\delta_0$, our algorithm proves the lemma
\begin{align*}
  \Gamma^{(n)}\vdash X:(\tilde{E},\tilde{\varepsilon},\tilde{\delta}),
\end{align*}
where $|\tilde{E}-\llbracket X\rrbracket|\le\tilde{\varepsilon}_0$. Without loss of generality, assume that $\llbracket X\rrbracket\ge0$ (so $\alpha=\llbracket X\rrbracket$). Then, note that
\begin{align*}
  \tilde{E}
  &\ge\llbracket X\rrbracket-\tilde{\varepsilon} \\
  &\ge\frac{2\alpha}{3} \\
  &>\tilde{\varepsilon},
\end{align*}
so $\tilde{E}-\tilde{\varepsilon}\ge0$, which implies that the inference rule for true inequalities applies. Thus, our algorithm proves the lemma
\begin{align*}
  \Gamma^{(n)}\vdash X:(\true,\tilde{\delta}),
\end{align*}
where $\tilde{\delta}=\gamma$. Note that since $\llbracket X\rrbracket\ge0$, we have $\llbracket X\ge0\rrbracket=\true$, so the claim follows.
\end{proof}

\subsection{Proof of Theorem~\ref{lem:terminate}}
\label{sec:terminateproof}

To show that Algorithm~\ref{alg:verify} terminates with probability $1$, it suffices to show that for any $\delta_0\in\mathbb{R}$, there exists $n_0\in\mathbb{N}$ such that our algorithm terminates after $n\le n_0$ steps with probability at least $1-\delta_0$. Applying Lemma~\ref{lem:bound}, we have that there exists $n_0\in\mathbb{N}$ such that with probability at least $1-\delta_0$, so
\begin{align*}
  \Gamma^{(n)}\vdash Y:(\llbracket Y\rrbracket,\gamma),
\end{align*}
where $\gamma\le\Delta$, where $\Delta$ is the confidence level given as input to Algorithm~\ref{alg:verify}. Therefore, the claim follows.
\qed

\subsection{Proof of Theorem~\ref{thm:terminate}}
\label{sec:mainproof}

For simplicity, we consider the case where there is a single leaf node labeled $\mu_Z$ in the given specification $Y$ (so $\llbracket Y\rrbracket_{\delta}=1$); the general case is a straightforward extension. First, we claim that if Algorithm~\ref{alg:verify} terminates and returns an incorrect response, then it must be the case that
\begin{align*}
  |\hat{\mu}_Z^{(J)}-\mu_Z|>\varepsilon(\delta_Z,J),
\end{align*}
where
\begin{align*}
\delta_Z=\Delta/\llbracket Y\rrbracket_{\delta}=\Delta,
\end{align*}
and $J$ is the number of iterations of our algorithm. Suppose to the contrary; then, the lemma
\begin{align*}
\mu_Z:(s/J,\varepsilon_Z(s/n,J),\delta_Z)
\end{align*}
in $\Gamma$ on the $J$th iteration of our algorithm holds. By Theorem~\ref{thm:sound}, we have $\Gamma\vdash Y:(I,\gamma)$ if and only if
\begin{align*}
\text{Pr}[\llbracket Y\rrbracket=I]\ge1-\gamma.
\end{align*}
Since Algorithm~\ref{alg:verify} has terminated, then it must be the case that $\gamma\le\Delta$. Thus, the response is correct, which is a contradiction, so the claim follows. Then,
\begin{align*}
  &\text{Pr}[\text{Algorithm}~\ref{alg:verify}~\text{terminates and responds incorrectly}] \\
  &\le\text{Pr}[J<\infty\wedge|\hat{\mu}_Z^{(J)}-\mu_Z|>\varepsilon(\delta_Z,J)] \\
  &\le\delta_Z \\
  &\le\Delta.
\end{align*}
The second inequality follows from Theorem~\ref{thm:refinedbernoulliconcentration}. Thus, Algorithm~\ref{alg:verify} is probabilistically sound and precise, as claimed.
\qed

\end{document}